  \newcommand{\OCMIX}{Fedorova/etal:2013}
\newcommand{\zzrelax}[1]{\relax}
\newlength{\picturewidth}
\newcounter{protocol}
\newcounter{temporary}
\newcommand{\dd}[1]{\,\textrm d{#1}}    
\DeclareMathOperator{\III}{\boldsymbol{1}}   
\DeclareMathOperator{\Prob}{\mathbb{P}}
\DeclareMathOperator{\Expect}{\mathbb{E}}
\DeclareMathOperator{\NNN}{N}   
\let\OE\relax                   
\DeclareMathOperator{\OE}{OE}   
\DeclareMathOperator{\SSS}{S}   
\DeclareMathOperator{\OF}{OF}   
\DeclareMathOperator{\MMM}{M}   
\DeclareMathOperator{\EEE}{E}   
\DeclareMathOperator{\OM}{OM}   
\DeclareMathOperator{\UUU}{U}   
\DeclareMathOperator{\FFF}{F}   
\DeclareMathOperator{\OU}{OU}   
\DeclareMathOperator{\CP}{CP}   
\DeclareMathOperator{\SP}{SP}   
\DeclareMathOperator{\MCP}{MCP} 
\DeclareMathOperator{\MSP}{MSP} 
\newcommand{\lc}{_{{\rm lc}}}   
  \newtheorem{lemma}{Lemma}
  \newtheorem{corollary}{Corollary}
  \newtheorem{theorem}{Theorem}
  \theoremstyle{definition}
  \newtheorem*{remark}{Remark}
  \title{Criteria of efficiency for conformal prediction\thanks{A preliminary version of this paper
    was published as Working Paper~11 of the On-line Compression Modelling project (New Series),
    \texttt{http://alrw.net}, in April 2014.}}
  \author{Vladimir Vovk, Ilia Nouretdinov, Valentina Fedorova,\\Ivan Petej, and Alex Gammerman\\[2mm]
  \texttt{\{volodya.vovk,alushaf,ivan.petej\}{\rm@}gmail.com}\\
  \texttt{\{ilia,alex\}{\rm@}cs.rhul.ac.uk}}
\begin{document}
  \maketitle

\begin{abstract}
  We study optimal conformity measures for various criteria of efficiency of classification
  in an idealised setting.
  This leads to an important class of criteria of efficiency that we call probabilistic;
  it turns out that the most standard criteria of efficiency used in literature on conformal prediction
  are not probabilistic unless the problem of classification is binary.
  We consider both unconditional and label-conditional conformal prediction.

    \bigskip

    \noindent
    The conference version of this paper has been published in the Proceedings of COPA 2016.
\end{abstract}

\section{Introduction}

Conformal prediction is a method of generating prediction sets
that are guaranteed to have a prespecified coverage probability;
in this sense conformal predictors have guaranteed validity.
Different conformal predictors, however, widely differ in their efficiency,
by which we mean the narrowness, in some sense, of their prediction sets.
Empirical investigation of the efficiency of various conformal predictors
is becoming a popular area of research:
see, e.g., \cite{Bala/etal:2014,AMAI:2015}
(and the COPA Proceedings, 2012--2016).
This paper points out that the standard criteria of efficiency used in literature
have a serious disadvantage,
and we define a class of criteria of efficiency, called ``probabilistic'',
that do not share this disadvantage.
In two recent papers
\cite{\OCMIX,Johansson/etal:2013}
two probabilistic criteria have been introduced,
and in this paper we introduce two more
and argue that probabilistic criteria should be used in place of more standard ones.
We concentrate on the case of classification only (the label space is finite).

Surprisingly few criteria of efficiency have been used in literature,
and even fewer have been studied theoretically.
We can speak of the efficiency of individual predictions
or of the overall efficiency of predictions on a test sequence;
the latter is usually (in particular, in this paper) defined by averaging
the efficiency over the individual test examples,
and so in this introductory section we only discuss the former.
This section assumes that the reader knows the basic definitions
of the theory of conformal prediction,
but they will be given in Section~\ref{sec:criteria}
(and Section~\ref{sec:label-conditional} for the label-conditional version),
which can be consulted now.

The two criteria for efficiency of a prediction that have been used most often in literature
(in, e.g., the references given above) are:
\begin{itemize}
\item
  The confidence and credibility of the prediction
  (see, e.g., \cite{Vovk/etal:2005book}, p.~96;
  introduced in \cite{Saunders/etal:1999}).
  This criterion does not depend on the choice of a significance level~$\epsilon$.
\item
  Whether the prediction is a singleton (the ideal case),
  multiple (an inefficient prediction),
  or empty (a superefficient prediction)
  at a given significance level~$\epsilon$.
  This criterion was introduced in \cite{Melluish/etal:2001}, Section~7.2,
  and used extensively in \cite{Vovk/etal:2005book}.
\end{itemize}
The other two criteria that had been used
before the publication of the conference version \cite{Vovk/etal:2016COPA} of this paper
are the sum of the p-values for all potential labels
(this does not depend on the significance level)
and the size of the prediction set at a given significance level:
see the papers
\cite{\OCMIX} and \cite{Johansson/etal:2013}.

In this paper we introduce six other criteria of efficiency:
see Section~\ref{sec:criteria}.
We then discuss (in Sections~\ref{sec:optimal}--\ref{sec:not-probabilistic})
the conformity measures that optimise each of the ten criteria
when the data-generating distribution is known;
this sheds light on the kind of behaviour implicitly encouraged by the criteria
even in the realistic case where the data-generating distribution is unknown.
As we point out in Section~\ref{sec:not-probabilistic},
probabilistic criteria of efficiency are conceptually similar to ``proper scoring rules''
in probability forecasting \cite{Dawid:ESS2006PF,Gneiting/Raftery:2007},
and this is our main motivation for their detailed study in this paper.
In Section~\ref{sec:proofs} we prove the results of Section~\ref{sec:not-probabilistic}.
After that we briefly illustrate the empirical behaviour of two of the criteria
for standard conformal predictors and a benchmark data set
(Section~\ref{sec:empirical}).
Sections~\ref{sec:criteria}--\ref{sec:empirical} discuss the most standard unconditional conformal predictors.
Section~\ref{sec:label-conditional} defines label-conditional conformal predictors
and discusses the analogues of the results of the previous sections for label-conditional predictors.
Finally,
Section~\ref{sec:conclusion} gives some directions of further research.

A version (with a different treatment of empty observations)
of one of the new non-probabilistic criteria of efficiency that we discuss in this paper
(the one that we call the E criterion) has been introduced independently in \cite{Sadinle/etal:2016}.

We only consider the case of randomised (``smoothed'') conformal predictors:
the case of deterministic predictors may lead to combinatorial problems
without an explicit \label{p:combinatorial-1}solution
(this is the case, e.g., for the N criterion defined below).
The situation here is analogous to the Neyman--Pearson lemma:
cf.\ \cite{Lehmann:1986}, Section~3.2.

\section{Criteria of Efficiency for Conformal Predictors and Transducers}
\label{sec:criteria}

Let $\mathbf{X}$ be a measurable space (the \emph{object space})
and $\mathbf{Y}$ be a finite set equipped with the discrete $\sigma$-algebra
(the \emph{label space});
the \emph{example space} is defined to be $\mathbf{Z}:=\mathbf{X}\times\mathbf{Y}$.
We will always assume that the label space $\mathbf{Y}$ is non-empty,
and will usually assume that its size is at least~2.
A \emph{conformity measure} is a measurable function $A$ that assigns to every finite sequence
$(z_1,\ldots,z_n)\in\mathbf{Z}^*$ of examples
a same-length sequence $(\alpha_1,\ldots,\alpha_n)$ of real numbers
and that is equivariant with respect to permutations:
for any $n$ and any permutation $\pi$ of $\{1,\ldots,n\}$,
\begin{equation*}
  (\alpha_1,\ldots,\alpha_n)
  =
  A(z_1,\ldots,z_n)
  \Longrightarrow 
  \left(\alpha_{\pi(1)},\ldots,\alpha_{\pi(n)}\right)
  =
  A\left(z_{\pi(1)},\ldots,z_{\pi(n)}\right).
\end{equation*}
The \emph{conformal predictor} determined by $A$ is defined by
\begin{equation}\label{eq:conformal-predictor}
  \Gamma^{\epsilon}(z_1,\ldots,z_l,x)
  =
  \Gamma^{\epsilon}(z_1,\ldots,z_l,x,\tau)
  :=
  \left\{
    y
    \mid
    p^y>\epsilon
  \right\},
\end{equation}
where $(z_1,\ldots,z_l)\in\mathbf{Z}^*$ is a training sequence,
$x$ is a test object,
$\epsilon\in(0,1)$ is a given \emph{significance level},
for each $y\in\mathbf{Y}$ the corresponding \emph{p-value} $p^y$ is defined by
\begin{multline}\label{eq:p}
  p^y
  =
  p^y(z_1,\ldots,z_l,x_{l+1})
  :=
  \frac{1}{l+1}
  \left|\left\{i=1,\ldots,l+1\mid\alpha^y_i<\alpha^y_{l+1}\right\}\right|\\
  +
  \frac{\tau}{l+1}
  \left|\left\{i=1,\ldots,l+1\mid\alpha^y_i=\alpha^y_{l+1}\right\}\right|,
\end{multline}
$\tau$ is a random number distributed uniformly on the interval $[0,1]$
(even conditionally on all the examples),
and the corresponding sequence of \emph{conformity scores} is defined by
\begin{equation}\label{eq:conformity-scores}
  (\alpha_1^y,\ldots,\alpha_l^y,\alpha_{l+1}^y)
  :=
  A(z_1,\ldots,z_l,(x,y)).
\end{equation}
Notice that the system of \emph{prediction sets} \eqref{eq:conformal-predictor}
output by a conformal predictor is decreasing in $\epsilon$, or \emph{nested}.

The \emph{conformal transducer} determined by $A$
outputs the system of p-values $(p^y\mid y\in\mathbf{Y})$
defined by \eqref{eq:p}
for each training sequence $(z_1,\ldots,z_l)$ of examples and each test object $x$.
(This is just a different representation of the conformal predictor.)

Notice that the p-values \eqref{eq:p}
(and, therefore, the corresponding conformal predictors and transducers)
only depend on the \emph{conformity order} corresponding to the given conformity measure:
namely, on the way that the elements of a sequence $(z_1,\ldots,z_n)$
are ordered by the values $(\alpha_1,\ldots,\alpha_n)$
(with $z_i\preceq z_j$ defined to be $\alpha_i\le\alpha_j$).
Therefore, to define conformal predictors and transducers
we may define their conformity orders rather than conformity measures.

The standard property of validity for conformal transducers
is that the p-values $p^y$ are distributed uniformly on $[0,1]$
when the examples $z_1,\ldots,z_l,(x,y)$ are generated independently
from the same probability distribution $Q$ on $\mathbf{Z}$
and $\tau$ is generated independently from the uniform probability distribution on $[0,1]$
(see, e.g., \cite{Vovk/etal:2005book}, Proposition~2.8).
This implies that the probability of error,
$y\notin\Gamma^{\epsilon}(z_1,\ldots,z_l,x)$,
for conformal predictors
is $\epsilon$ at any significance level~$\epsilon$.

Suppose we are given a test sequence $(z_{l+1},\ldots,z_{l+k})$
and would like to use it to measure the efficiency of the predictions
derived from the training sequence $(z_1,\ldots,z_l)$.
(Informally, by the efficiency of conformal predictors
we mean that the prediction sets they output tend to be small,
and by the efficiency of conformal transducers we mean that the p-values they output tend to be small.)
For each test example $z_i=(x_i,y_i)$,
$i=l+1,\ldots,l+k$,
we have a nested family $(\Gamma_i^{\epsilon}\mid\epsilon\in(0,1))$
of subsets of $\mathbf{Y}$,
where
\[
  \Gamma_i^{\epsilon}
  :=
  \Gamma^{\epsilon}(z_1,\ldots,z_l,x_i),
\]
and a system of p-values $(p^y_i\mid y\in\mathbf{Y})$,
where
\[
  p_i^y
  :=
  p^y(z_1,\ldots,z_l,x_i).
\]
In this paper we will discuss ten criteria of efficiency
for such a family or a system,
but some of them will depend, additionally,
on the observed label $y_i$ of the test example.
We start from the \emph{prior} criteria, which do not depend on the observed test labels.

\subsection{Basic criteria}

We will discuss two kinds of criteria:
those applicable to the prediction sets $\Gamma_i^{\epsilon}$
and so depending on the significance level $\epsilon$
and those applicable to systems of p-values $(p_i^y\mid y\in\mathbf{Y})$
and so independent of $\epsilon$.
The simplest criteria of efficiency are:
\begin{itemize}
\item
  The \emph{S criterion}
  (with ``S'' standing for ``sum'')
  measures efficiency by the average sum
  \begin{equation}\label{eq:S-real}
    \frac{1}{k}\sum_{i=l+1}^{l+k}\sum_{y}p^y_i
  \end{equation}
  of the p-values;
  small values are preferable for this criterion.
  It is $\epsilon$-free.
\item
  The \emph{N criterion}
  uses the average size
  \[\frac1k\sum_{i=l+1}^{l+k}\left|\Gamma_i^{\epsilon}\right|\]
  of the prediction sets
  (``N'' stands for ``number'':
  the size of a prediction set is the number of labels in it).
  Small values are preferable.
  Under this criterion the efficiency
  is a function of the significance level $\epsilon$.
\end{itemize}
Both these criteria are prior.
The S criterion was introduced in \cite{\OCMIX}
and the N criterion was introduced independently
in \cite{Johansson/etal:2013} and \cite{\OCMIX},
although the analogue of the N criterion for regression
(where the size of a prediction set is defined to be its Lebesgue measure)
had been used earlier in \cite{Lei/Wasserman:2013}
(whose arXiv version was published in 2012).

\subsection{Other prior criteria}

A disadvantage of the basic criteria is that they look too stringent.
Even for a very efficient conformal transducer,
we cannot expect all p-values $p^y$ to be small:
the p-value corresponding to the true label will not be small with high probability;
and even for a very efficient conformal predictor
we cannot expect the size of its prediction set to be zero:
with high probability it will contain the true label.
The other prior criteria are less stringent.
The ones that do not depend on the significance level are:
\begin{itemize}
\item
  The \emph{U criterion} (with ``U'' standing for ``unconfidence'')
  uses the average unconfidence
  \begin{equation}\label{eq:U}
    \frac{1}{k}
    \sum_{i=l+1}^{l+k}
    \min_y
    \max_{y'\ne y}
    p_i^{y'}
  \end{equation}
  over the test sequence,
  where the \emph{unconfidence} for a test object $x_i$
  is the second largest p-value $\min_y\max_{y'\ne y}p_i^{y'}$;
  small values of \eqref{eq:U} are preferable.
  The U criterion in this form was introduced in \cite{\OCMIX},
  but it is equivalent to using the average confidence (one minus unconfidence),
  which is very common.
  If two conformal transducers have the same average unconfidence,
  the criterion compares the average credibilities
  \begin{equation}\label{eq:credibility}
    \frac{1}{k}
    \sum_{i=l+1}^{l+k}
    \max_{y}
    p_i^{y},
  \end{equation}
  where the \emph{credibility} for a test object $x_i$
  is the largest p-value $\max_{y}p_i^y$;
  smaller values of \eqref{eq:credibility} are preferable.
  (Intuitively, a small credibility is a warning that the test object is unusual,
  and since such a warning presents useful information
  and the probability of a warning is guaranteed to be small,
  we want to be warned as often as possible.)
\item
  The \emph{F criterion} uses the average fuzziness
  \begin{equation}\label{eq:F}
    \frac{1}{k}
    \sum_{i=l+1}^{l+k}
    \left(
      \sum_y p_i^y - \max_y p_i^y
    \right),
  \end{equation}
  where the \emph{fuzziness} for a test object $x_i$
  is defined as the sum of all p-values apart from a largest one,
  i.e., as $\sum_y p_i^y - \max_y p_i^y$;
  smaller values of \eqref{eq:F} are preferable.
  If two conformal transducers lead to the same average fuzziness,
  the criterion compares the average credibilities \eqref{eq:credibility},
  with smaller values preferable.
\end{itemize}
Their counterparts depending on the significance level are:
\begin{itemize}
\item
  The \emph{M criterion} uses the percentage of objects $x_i$ in the test sequence
  for which the prediction set $\Gamma_i^{\epsilon}$ at significance level $\epsilon$ is \emph{multiple},
  i.e., contains more than one label.
  Smaller values are preferable.
  As a formula, the criterion prefers smaller
  \begin{equation}\label{eq:M}
    \frac{1}{k}
    \sum_{i=l+1}^{l+k}
    \III_{\{\left|\Gamma_i^{\epsilon}\right|>1\}},
  \end{equation}
  where $\III_E$ denotes the indicator function of the event $E$
  (taking value 1 if $E$ happens and 0 if not).
  When the percentage \eqref{eq:M} of multiple predictions is the same for two conformal predictors
  (which is a common situation: the percentage can well be zero
  when the data is clean and $\epsilon$ is not too demanding),
  the M criterion compares the percentages
  \begin{equation}\label{eq:empty}
    \frac{1}{k}
    \sum_{i=l+1}^{l+k}
    \III_{\{\Gamma_i^{\epsilon}=\emptyset\}}
  \end{equation}
  of empty predictions
  (larger values are preferable).
  This is a widely used criterion;
  in particular, it was used in \cite{Vovk/etal:2005book} and papers preceding it.
\item
  The \emph{E criterion}
  (where ``E'' stands for ``excess'')
  uses the average (over the test sequence, as usual) amount
  the size of the prediction set exceeds 1.
  In other words, the criterion gives the average number of excess labels
  in the prediction sets as compared with the ideal situation of one-element prediction sets.
  Smaller values are preferable for this criterion.
  As a formula, the criterion prefers smaller
  \begin{equation*}
    \frac{1}{k}
    \sum_{i=l+1}^{l+k}
    \left(
      \left|\Gamma_i^{\epsilon}\right| - 1
    \right)^+,
  \end{equation*}
  where $t^+:=\max(t,0)$.
  When these averages coincide for two conformal predictors,
  we compare the percentages \eqref{eq:empty} of empty predictions;
  larger values are preferable.
\end{itemize}
A criterion that is very similar to the M and E criteria is used by Lei in \cite{Lei:2014}
(Section~2.2);
that paper considers the binary case,
in which the difference between the M and E criteria disappears.
The difference of the criterion used in \cite{Lei:2014}
is that it prohibits empty predictions
(an intermediate approach would be to prefer smaller values for the number \eqref{eq:empty} of empty predictions).
Lei's criterion is extended to the multi-class case in \cite{Sadinle/etal:2016},
which proposes a modification of the E criterion
with a different treatment of empty predictions.

\subsection{Observed criteria}

The prior criteria discussed in the previous subsection
treat the largest p-value, or prediction sets of size 1,
in a special way.
The corresponding criteria of this subsection
attempt to achieve the same goal by using the observed label.

These are the observed counterparts of the non-basic prior $\epsilon$-free criteria:
\begin{itemize}
\item
  The \emph{OU} (``observed unconfidence'') \emph{criterion}
  uses the average observed unconfidence
  \[\frac{1}{k}\sum_{i=l+1}^{l+k}\max_{y\ne y_i}p^y_i\]
  over the test sequence,
  where the \emph{observed unconfidence} for a test example $(x_i,y_i)$
  is the largest p-value $p_i^y$ for the \emph{false labels}
  $y\ne y_i$.
  Smaller values are preferable for this test.
\item
  The \emph{OF} (``observed fuzziness'') \emph{criterion}
  uses the average sum of the p-values for the false labels,
  i.e.,
  \begin{equation}\label{eq:OF}
    \frac{1}{k}\sum_{i=l+1}^{l+k}\sum_{y\ne y_i}p^y_i;
  \end{equation}
  smaller values are preferable.
\end{itemize}
The counterparts of the last group depending on the significance level $\epsilon$ are:
\begin{itemize}
\item
  The \emph{OM criterion} uses the percentage of observed multiple predictions
  \begin{equation*}
    \frac{1}{k}
    \sum_{i=l+1}^{l+k}
    \III_{\{\Gamma_i^{\epsilon}\setminus\{y_i\}\ne\emptyset\}}
  \end{equation*}
  in the test sequence,
  where an \emph{observed multiple} prediction is defined to be a prediction set
  including a false label.
  Smaller values are preferable.
\item
  The \emph{OE criterion}
  (OE standing for ``observed excess'')
  uses the average number
  \begin{equation*}
    \frac{1}{k}
    \sum_{i=l+1}^{l+k}
    \left|
      \Gamma_i^{\epsilon} \setminus \{y_i\}
    \right|
  \end{equation*}
  of false labels included in the prediction sets at significance level $\epsilon$;
  smaller values are preferable.
\end{itemize}

The ten criteria used in this paper are given in Table~\ref{tab:criteria}.
Half of the criteria depend on the significance level $\epsilon$,
and the other half are the respective $\epsilon$-free versions.

\begin{table}[tb]
\caption{The ten criteria studied in this paper:
  the two basic ones in the upper section;
  the four other prior ones in the middle section;
  and the four observed ones in the lower section}
  \label{tab:criteria}

  \medskip
\begin{center}
\begin{tabular}{c|c}
  \hline
  \textbf{$\epsilon$-free} & \textbf{$\epsilon$-dependent}\\
  \hline\hline
  \emph{S (sum of p-values)} & \emph{N (number of labels)}\\
  \hline
  U (unconfidence) & M (multiple)\\
  F (fuzziness) & E (excess)\\
  \hline
  OU (observed unconfidence) & OM (observed multiple)\\
  \emph{OF (observed fuzziness)} & \emph{OE (observed excess)}\\
  \hline
\end{tabular}
\end{center}
\end{table}

In the case of binary classification problems, $\left|\mathbf{Y}\right|=2$,
the number of different criteria of efficiency
in Table~\ref{tab:criteria} reduces to six:
the criteria not separated by a vertical or horizontal line
(namely, U and F, OU and OF, M and E, and OM and OE)
coincide.

\section{Idealised Setting}
\label{sec:optimal}

Starting from this section we consider the limiting case of infinitely long training and test sequences
(and we will return to the realistic finitary case only in Section~\ref{sec:empirical},
where we describe our empirical studies).
To formalise the intuition of an infinitely long training sequence,
we assume that the prediction algorithm
is directly given the data-generating probability distribution $Q$ on $\mathbf{Z}$
instead of being given a training sequence.
Instead of conformity measures we will use \emph{idealised conformity measures}:
functions $A(Q,z)$ of $Q\in\mathcal{P}(\mathbf{Z})$
(where $\mathcal{P}(\mathbf{Z})$ is the set of all probability measures on $\mathbf{Z}$)
and $z\in\mathbf{Z}$.
We will fix the data-generating distribution $Q$ for the rest of the paper,
and so write the corresponding conformity scores as $A(z)$.
The \emph{idealised conformal predictor} corresponding to $A$
outputs the following prediction set $\Gamma^{\epsilon}(x)$
for each object $x\in\mathbf{X}$ and each significance level $\epsilon\in(0,1)$.
For each potential label $y\in\mathbf{Y}$ for $x$
define the corresponding \emph{p-value} as
\begin{multline}\label{eq:p-value}
  p^y
  =
  p(x,y)
  =
  p_A(x,y)
  =
  p_A(x,y,\tau)
  :=
  Q\{z\in\mathbf{Z}\mid A(z)<A(x,y)\}\\
  +
  \tau
  Q\{z\in\mathbf{Z}\mid A(z)=A(x,y)\}
\end{multline}
(it would have been more correct to write $A((x,y))$ and $Q(\{\ldots\})$,
but we often omit pairs of parentheses when there is no danger of ambiguity),
where $\tau$ is a random number distributed uniformly on $[0,1]$.
(The same random number $\tau$ is used in~\eqref{eq:p-value} for all $(x,y)$.)
The prediction set is
\begin{equation}\label{eq:prediction-set}
  \Gamma^{\epsilon}(x)
  =
  \Gamma_A^{\epsilon}(x)
  =
  \Gamma_A^{\epsilon}(x,\tau)
  :=
  \left\{
    y\in\mathbf{Y}
    \mid
    p(x,y)>\epsilon
  \right\}.
\end{equation}
The \emph{idealised conformal transducer} corresponding to $A$
outputs for each object $x\in\mathbf{X}$
the system of p-values $(p^y\mid y\in\mathbf{Y})$ defined by \eqref{eq:p-value};
in the idealised case we will usually use the alternative notation $p(x,y)$ for $p^y$.

We could have used the \emph{idealised conformity order} when defining the p-values \eqref{eq:p-value}:
$z\preceq z'$ is defined to mean $A(z)\le A(z')$.
Let us say that two idealised conformity measures are \emph{equivalent}
if they lead to the same idealised conformity order;
in other words, $A$ and $B$ are equivalent if, for all $z,z'\in\mathbf{Z}$,
$A(z)\le A(z')\Leftrightarrow B(z)\le B(z')$.

The standard properties of validity for conformal transducers and predictors
mentioned in the previous section
simplify in this idealised case as follows:
\begin{itemize}
\item
  If $(x,y)$ is generated from $Q$ and $\tau\in[0,1]$ is generated from the uniform distribution
  independently of $(x,y)$,
  $p(x,y)$
  is distributed uniformly on $[0,1]$.
\item
  Therefore, at each significance level $\epsilon$
  the idealised conformal predictor makes an error
  with probability~$\epsilon$.
\end{itemize}

The test sequence being infinitely long is formalised
by replacing the use of a test sequence in the criteria of efficiency
by averaging with respect to the data-generating probability distribution $Q$.
In the case of the top two and bottom two criteria in Table~\ref{tab:criteria}
(the ones set in italics)
this is done as follows.
An idealised conformity measure~$A$ is:
\begin{itemize}
\item
  \emph{S-optimal} if,
  for any idealised conformity measure $B$,
  \begin{equation}\label{eq:S}
    \Expect_{x,\tau}
    \sum_{y\in\mathbf{Y}}p_A(x,y)
    \le
    \Expect_{x,\tau}
    \sum_{y\in\mathbf{Y}}p_B(x,y),
  \end{equation}
  where the notation $\Expect_{x,\tau}$
  refers to the expected value when $x$ and $\tau$ are independent,
  $x\sim Q_{\mathbf{X}}$,
  and $\tau\sim U$;
  $Q_{\mathbf{X}}$ is the marginal distribution of $Q$ on $\mathbf{X}$,
  and $U$ is the uniform distribution on $[0,1]$;
\item
  \emph{N-optimal} if,
  for any idealised conformity measure~$B$
  and any significance level~$\epsilon$,
  \begin{equation*}
    \Expect_{x,\tau}\left|\Gamma^{\epsilon}_A(x)\right|
    \le
    \Expect_{x,\tau}\left|\Gamma^{\epsilon}_B(x)\right|;
  \end{equation*}
\item
  \emph{OF-optimal} if,
  for any idealised conformity measure~$B$,
  \begin{equation*}
    \Expect_{(x,y),\tau}
    \sum_{y'\ne y}p_A(x,y')
    \le
    \Expect_{(x,y),\tau}
    \sum_{y'\ne y}p_B(x,y'),
  \end{equation*}
  where the lower index $(x,y)$ in $\Expect_{(x,y),\tau}$
  refers to averaging over $(x,y)\sim Q$
  (with $(x,y)$ and $\tau$ independent);
\item
  \emph{OE-optimal} if,
  for any idealised conformity measure~$B$
  and any significance level~$\epsilon$,
  \[
    \Expect_{(x,y),\tau}
    \left|\Gamma^{\epsilon}_A(x)\setminus\{y\}\right|
    \le
    \Expect_{(x,y),\tau}
    \left|\Gamma^{\epsilon}_B(x)\setminus\{y\}\right|.
  \]
\end{itemize}
We will define the idealised versions of the other six criteria
listed in Table~\ref{tab:criteria} in Section~\ref{sec:not-probabilistic}.

\section{Probabilistic Criteria of Efficiency}

Our goal in this section is to characterise the optimal idealised conformity measures
for the four criteria of efficiency that are set in italics in Table~\ref{tab:criteria}.
We will assume in the rest of the paper that the set $\mathbf{X}$ is finite
(from the practical point of view, this is not a restriction);
since we consider the case of classification, $\left|\mathbf{Y}\right|<\infty$,
this implies that the whole example space $\mathbf{Z}$ is finite.
Without loss of generality, we also assume that the data-generating probability distribution $Q$
satisfies $Q_{\mathbf{X}}(x)>0$ for all $x\in\mathbf{X}$
(we often omit curly braces in expressions such as $Q_{\mathbf{X}}(\{x\})$):
we can always omit the $x$s for which $Q_{\mathbf{X}}(x)=0$.

The \emph{conditional probability (CP) idealised conformity measure} is
\begin{equation}\label{eq:CP}
  A(x,y)
  =
  Q(y\mid x)
  =
  Q_{\mathbf{Y}\mid\mathbf{X}}(y\mid x)
  :=
  \frac{Q(x,y)}{Q_{\mathbf{X}}(x)}.
\end{equation}
(In this paper, we will invariably use the shorter notation $Q(y\mid x)$
instead of the more precise $Q_{\mathbf{Y}\mid\mathbf{X}}(y\mid x)$;
we will never need $Q_{\mathbf{X}\mid\mathbf{Y}}$,
which could be defined analogously.)
This idealised conformity measure was introduced
by an anonymous referee of the conference version of \cite{\OCMIX},
but its non-idealised analogue in the case of regression had been used
in \cite{Lei/Wasserman:2013}
(following \cite{Lei/etal:2013} and literature on minimum volume prediction).
We say that an idealised conformity measure $A$ is a \emph{refinement}
of an idealised conformity measure $B$
if
\begin{equation}\label{eq:refinement}
  B(z_1)<B(z_2)
  \Longrightarrow
  A(z_1)<A(z_2)
\end{equation}
for all $z_1,z_2\in\mathbf{Z}$.
Let $\mathcal{R}(\CP)$ be the set of all refinements of the CP idealised conformity measure.
If $C$ is a criterion of efficiency (one of the ten criteria in Table~\ref{tab:criteria}),
we let $\mathcal{O}(C)$ stand for the set of all $C$-optimal idealised conformity measures.

\begin{theorem}\label{thm:CP}
  $\mathcal{O}(\SSS)=\mathcal{O}(\OF)=\mathcal{O}(\NNN)=\mathcal{O}(\OE)=\mathcal{R}(\CP)$.
\end{theorem}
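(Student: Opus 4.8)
The plan is to handle all four criteria by one linear‑programming / Neyman--Pearson scheme, after reducing the two $\epsilon$‑dependent criteria to their $\epsilon$‑free partners. Throughout write $q(z):=Q(\{z\})$, and for an idealised conformity measure $A$ put $\beta_A(z,\epsilon):=\Pr_\tau(p_A(z)>\epsilon)$ (the probability that $z=(x,y)$ lands in $\Gamma^\epsilon_A(x)$) and $\bar p_A(z):=\Expect_\tau p_A(z)=\int_0^1\beta_A(z,\epsilon)\dd\epsilon$. Since everything depends only on the conformity order and $\mathbf Z$ is finite, a direct regrouping rewrites each objective as a linear functional $\sum_{z\in\mathbf Z}c(z)\,g_A(z)$: for $\SSS$ and $\NNN$ the weight is $c(z)=Q_{\mathbf X}(x)$, while collecting the observed criteria by the potential label gives $c(z)=Q_{\mathbf X}(x)(1-Q(y\mid x))=Q_{\mathbf X}(x)-q(z)$ for $\OF$ and $\OE$; the factor $g_A$ is $\bar p_A$ for the two $\epsilon$‑free criteria and $\beta_A(\cdot,\epsilon)$ for the two $\epsilon$‑dependent ones. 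Crucially, the $\NNN$‑objective integrated over $\epsilon\in(0,1)$ equals the $\SSS$‑objective, and likewise $\OE$ integrates to $\OF$.

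This integral identity gives the easy inclusions. If $A$ is $\NNN$‑optimal, the per‑$\epsilon$ inequalities $\Expect_{x,\tau}|\Gamma^\epsilon_A|\le\Expect_{x,\tau}|\Gamma^\epsilon_B|$ integrate over $\epsilon$ to the $\SSS$‑inequality, so $\mathcal O(\NNN)\subseteq\mathcal O(\SSS)$; the same gives $\mathcal O(\OE)\subseteq\mathcal O(\OF)$. It therefore suffices to prove $\mathcal O(\SSS)=\mathcal O(\OF)=\mathcal R(\CP)$ together with the reverse inclusions $\mathcal R(\CP)\subseteq\mathcal O(\NNN)$ and $\mathcal R(\CP)\subseteq\mathcal O(\OE)$.

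For $\SSS$ and $\OF$ I would use a pairwise decoupling. Expanding $\bar p_A(z)=\sum_{z'\,:\,A(z')<A(z)}q(z')+\tfrac12\sum_{z'\,:\,A(z')=A(z)}q(z')$ and collecting over unordered pairs $\{z,z'\}$ turns $\sum_z c(z)\bar p_A(z)$ into an order‑independent constant $\tfrac12\sum_z c(z)q(z)$ plus a sum over pairs whose contribution is $c(z)q(z')$ when $z$ is ranked above $z'$, $c(z')q(z)$ when $z'$ is ranked above $z$, and the average of the two when they are tied. Each pair can thus be optimised separately, and the pairwise minimum ranks higher the example with the smaller ratio $c(z)/q(z)$. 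For both weights this ratio ($1/Q(y\mid x)$ for $\SSS$, $1/Q(y\mid x)-1$ for $\OF$) is a strictly decreasing function of $Q(y\mid x)$, so the pairwise‑optimal order is exactly the CP order, which—being induced by a real function—is simultaneously realisable. A measure attains the global minimum iff it attains every pairwise minimum, i.e.\ iff it ranks strictly by CP on every pair with $Q(y_1\mid x_1)\ne Q(y_2\mid x_2)$ and is unconstrained on CP‑ties; this is precisely $A\in\mathcal R(\CP)$, giving $\mathcal O(\SSS)=\mathcal O(\OF)=\mathcal R(\CP)$.

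The reverse inclusions are where the real work lies, and I expect this to be the main obstacle. Fix $\epsilon$. Validity (uniformity of $p_A(x,y)$ under $(x,y)\sim Q$) holds for every idealised conformity measure and yields the single linear constraint $\sum_z q(z)\beta_A(z,\epsilon)=1-\epsilon$, with trivially $\beta_A(\cdot,\epsilon)\in[0,1]^{\mathbf Z}$. Minimising $\sum_z c(z)\beta(z)$ over all such $\beta$ is a fractional knapsack whose greedy solution sets $\beta=1$ on the examples of smallest ratio $c(z)/q(z)$, i.e.\ (by the same monotonicity) on the top CP‑levels up to the budget $1-\epsilon$; this value is an unconditional lower bound for every $B$. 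It remains to check that each $A\in\mathcal R(\CP)$ meets the bound: since $A$ ranks the CP‑level sets in CP‑order, $\beta_A(z,\epsilon)$ is $1$ above the critical level and $0$ below it, and on the critical level the sweep $p_A(z,\tau)=U+b(z)+\tau q(z)$ (with $U$ the $Q$‑mass strictly below that level and $b(z)$ the within‑level mass ranked below $z$) tiles the interval $[U,U+\mu]$ of the level's mass $\mu$, so $\sum_{z\,\text{critical}}q(z)\beta_A(z,\epsilon)$ telescopes to the exact budget remainder $U+\mu-\epsilon$, independently of how $A$ breaks ties inside the level. Converting $q$‑mass to $c$‑weight (constant $c/q$ within a level) then shows the $\NNN$‑ and $\OE$‑objectives of $A$ equal the LP optimum for every $\epsilon$, yielding $\mathcal R(\CP)\subseteq\mathcal O(\NNN)$ and $\mathcal R(\CP)\subseteq\mathcal O(\OE)$, and closing the chain $\mathcal R(\CP)\subseteq\mathcal O(\NNN)\subseteq\mathcal O(\SSS)=\mathcal R(\CP)$ (and its $\OF$/$\OE$ analogue). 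The delicate points are the exact tie‑handling in this telescoping and the boundary cases ($Q(y\mid x)=0$ and $\epsilon$ landing on a level boundary), which affect only $\tau$‑null sets and so leave the expectations unchanged.
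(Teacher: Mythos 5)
Your proposal is correct, but it takes a genuinely different route from the paper's. The paper anchors everything at the N criterion: it proves $\mathcal{O}(\NNN)=\mathcal{R}(\CP)$ in both directions at once by invoking the Neyman--Pearson lemma, transfers this to S via the same Fubini identity you use \emph{plus} a right-continuity lemma (needed there because a failure of N-optimality at a single $\epsilon$ must be upgraded to failure on an interval before it can affect the integral), and then handles OF and OE in one line each via the constant-shift identities $\Expect_{x,\tau}\sum_y p(x,y)=\Expect_{(x,y),\tau}\sum_{y'\ne y}p(x,y')+\frac12$ and $\Expect_{x,\tau}\left|\Gamma^{\epsilon}(x)\right|=\Expect_{(x,y),\tau}\left|\Gamma^{\epsilon}(x)\setminus\{y\}\right|+(1-\epsilon)$, both consequences of validity. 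You instead anchor at the $\epsilon$-free criteria: your pairwise-decoupling argument characterises $\mathcal{O}(\SSS)=\mathcal{O}(\OF)=\mathcal{R}(\CP)$ outright (this argument appears nowhere in the paper), after which you only ever need the \emph{easy} directions for the $\epsilon$-dependent criteria---attainment $\mathcal{R}(\CP)\subseteq\mathcal{O}(\NNN),\mathcal{O}(\OE)$ via the fractional-knapsack lower bound (which is essentially the finite-space Neyman--Pearson lemma made explicit as an LP) and the integration inclusions $\mathcal{O}(\NNN)\subseteq\mathcal{O}(\SSS)$, $\mathcal{O}(\OE)\subseteq\mathcal{O}(\OF)$---so the chain closes without ever proving the hard direction $\mathcal{O}(\SSS)\subseteq\mathcal{O}(\NNN)$ on its own. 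What your route buys: it is self-contained (no black-box appeal to Neyman--Pearson), it dispenses with the right-continuity lemma entirely, and it treats the observed criteria uniformly by re-weighting, $c(z)=Q_{\mathbf{X}}(x)-q(z)$, rather than by separate identities. What the paper's route buys: brevity where classical results apply, and the shift identities make the equivalences $\mathcal{O}(\SSS)=\mathcal{O}(\OF)$ and $\mathcal{O}(\NNN)=\mathcal{O}(\OE)$ immediate. The delicate points you flag do work out: since any $A\in\mathcal{R}(\CP)$ can only have ties inside CP-level sets, each $A$-tied group within the critical level sweeps an interval whose length is the group's $Q$-mass, these intervals tile the level's mass interval, and the telescoping (hence the attainment of the LP optimum) is unaffected; likewise the $q(z)=0$ examples sit at the bottom of the CP order and contribute nothing to either side.
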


We say that an efficiency criterion is \emph{probabilistic}
if the CP idealised conformity measure is always optimal for it.
We will also use two modifications of this definition:
an efficiency criterion is \emph{strongly probabilistic}
if any refinement of the CP idealised conformity measure is optimal for it,
and it is \emph{weakly probabilistic}
if some refinement of the CP idealised conformity measure is optimal for it.
We will say that it is \emph{BW probabilistic}
(or \emph{binary-weakly probabilistic})
if some refinement of the CP idealised conformity measure is optimal for it
whenever $\left|\mathbf{Y}\right|=2$.
Theorem~\ref{thm:CP} shows that four of our ten criteria are strongly probabilistic,
namely S, N, OF, and OE
(they are set in italics in Table~\ref{tab:criteria}).
In the next section we will see that in general the other six criteria
are not probabilistic (they are only BW probabilistic).
The intuition behind probabilistic criteria
will be briefly discussed also in the next section.

\begin{proof}[Proof of Theorem~\ref{thm:CP}]
  We start from proving $\mathcal{R}(\CP)=\mathcal{O}(\NNN)$.
  Let $A$ be any idealised conformity measure.
  Fix for a moment a significance level $\epsilon$.
  For each example $(x,y)\in\mathbf{Z}$,
  let $P(x,y)$ be the probability that the idealised conformal predictor based on $A$ makes an error
  on the example $(x,y)$ at the significance level $\epsilon$,
  i.e., the probability (over $\tau$) of $y\notin\Gamma^{\epsilon}_A(x)$.
  It is clear from \eqref{eq:p-value} and \eqref{eq:prediction-set}
  that $P$ takes at most three possible values (0, 1, and an intermediate value)
  and that
  \begin{equation}\label{eq:constraint}
    \sum_{x,y}
    Q(x,y)P(x,y)
    =
    \epsilon
  \end{equation}
  (which just reflects the fact that the probability of error is $\epsilon$).
  Vice versa, any $P$ satisfying these properties
  will also satisfy
  \[
    \forall(x,y):
    P(x,y)
    =
    \Prob_{\tau}
    \left(
      y\notin\Gamma^{\epsilon}_A(x,\tau)
    \right)
  \]
  for some $A$,
  $\Prob_{\tau}$ standing for the probability when $\tau\sim U$.
  Let us see when we will have $A\in\mathcal{O}(\NNN)$ ($A$ is an N-optimal idealised conformity measure).
  Define $Q'$ to be the probability measure on $\mathbf{Z}$ such that $Q'_{\mathbf{X}}=Q_{\mathbf{X}}$
  and $Q'(y\mid x)=1/\left|\mathbf{Y}\right|$ does not depend on $y$.
  The N criterion at significance level $\epsilon$ for $A$
  can be evaluated as
  \begin{equation}\label{eq:objective}
    \Expect_{x,\tau}
    \left|
      \Gamma^{\epsilon}_A(x)
    \right|
    =
    \left|\mathbf{Y}\right|
    \left(
      1
      -
      \sum_{(x,y)\in\mathbf{Z}}
      Q'(x,y)P(x,y)
    \right);
  \end{equation}
  this expression should be minimised,
  i.e., $\sum_{(x,y)}Q'(x,y)P(x,y)$ should be maximised,
  under the restriction \eqref{eq:constraint}.
  Let us apply the Neyman--Pearson fundamental lemma (\cite{Lehmann:1986}, Sect.~3.2, Theorem~1)
  using $Q$ as the null and $Q'$ as the alternative hypotheses.
  We can see that $\Expect_{x,\tau}\left|\Gamma^{\epsilon}_A(x)\right|$
  takes its minimal value if and only if there exist thresholds $k_1=k_1(\epsilon)$, $k_2=k_2(\epsilon)$, and $k_3=k_3(\epsilon)$
  such that:
  \begin{itemize}
  \item
    $Q\{(x,y)\mid Q(y\mid x)<k_1\}<\epsilon\le Q\{(x,y)\mid Q(y\mid x)\le k_1\}$,
  \item
    $k_2<k_3$,
  \item
    $A(x,y)<k_2$ if $Q(y\mid x)<k_1$,
  \item
    $k_2<A(x,y)<k_3$ if $Q(y\mid x)=k_1$,
  \item
    $A(x,y)>k_3$ if $Q(y\mid x)>k_1$.
  \end{itemize}
  This will be true for all $\epsilon$ if and only if $Q(y\mid x)$ is a function of $A(x,y)$
  (meaning that there exists a function $F$ such that, for all $(x,y)$, $Q(y\mid x)=F(A(x,y))$).
  This completes the proof of $\mathcal{R}(\CP)=\mathcal{O}(\NNN)$.

  Next we show that $\mathcal{O}(\NNN)=\mathcal{O}(\SSS)$.
  The chain of equalities
  \begin{multline}\label{eq:basic}
    \sum_{y \in \mathbf{Y}}
    p(x,y)
    =
    \sum_{y \in \mathbf{Y}}
    \int_0^1
    \III_{\{p(x,y)>\epsilon\}}
    \dd{\epsilon}\\
    =
    \int_0^1
      \sum_{y \in \mathbf{Y}}
      \III_{\{p(x,y)>\epsilon\}}
    \dd{\epsilon}
    =
    \int_0^1
      \left|\Gamma^{\epsilon}(x)\right|
    \dd{\epsilon}
  \end{multline}
  (which will be used as the model in several other proofs in the rest of this paper)
  implies, by Fubini's theorem,
  \begin{equation}\label{eq:expectation}
    \Expect_{x,\tau}
    \sum_{y \in \mathbf{Y}}
    p(x,y)
    =
    \int_0^1
    \Expect_{x,\tau}
    \left|\Gamma^{\epsilon}(x)\right|
    \dd{\epsilon}.
  \end{equation}
  We can see that $A\in\mathcal{O}(\SSS)$ whenever $A\in\mathcal{O}(\NNN)$:
  indeed, any N-optimal idealised conformity measure minimises the expectation
  $\Expect_{x,\tau}\left|\Gamma^{\epsilon}(x)\right|$
  on the right-hand side of \eqref{eq:expectation}
  for all $\epsilon$ simultaneously,
  and so minimises the whole right-hand-side,
  and so minimises the left-hand-side.
  On the other hand,
  $A\notin\mathcal{O}(\SSS)$ whenever $A\notin\mathcal{O}(\NNN)$:
  indeed, if an idealised conformity measure fails
  to minimise the expectation $\Expect_{x,\tau}\left|\Gamma^{\epsilon}(x)\right|$
  on the right-hand side of \eqref{eq:expectation}
  for some $\epsilon$,
  it fails to do so for all $\epsilon$ in a non-empty open interval
  (because of the right-continuity of $\Expect_{x,\tau}\left|\Gamma^{\epsilon}(x)\right|$ in $\epsilon$,
  which is proved in Lemma~\ref{lem:right-continuous}(b) below),
  and therefore, it does not minimise the right-hand side of \eqref{eq:expectation}
  (any N-optimal idealised conformity measure, such as the CP idealised conformity measure,
  will give a smaller value),
  and therefore, it does not minimise the left-hand side of~\eqref{eq:expectation}.

  The equality $\mathcal{O}(\SSS)=\mathcal{O}(\OF)$ follows from
  \[
    \Expect_{x,\tau} \sum_{y} p(x,y)
    =
    \Expect_{(x,y),\tau} \sum_{y'\ne y}p(x,y') + \frac12,
  \]
  where we have used the fact that $p(x,y)$ is distributed uniformly on $[0,1]$
  when $((x,y),\tau)\sim Q\times U$
  (see \cite{Vovk/etal:2005book}).

  Finally, we notice that $\mathcal{O}(\NNN)=\mathcal{O}(\OE)$.
  Indeed, for any significance level $\epsilon$,
  \[
    \Expect_{x,\tau} |\Gamma^\epsilon(x)|
    =
    \Expect_{(x,y),\tau} |\Gamma^\epsilon(x) \setminus \{y\}|
    +
    (1-\epsilon),
  \]
  again using the fact that $p(x,y)$ is distributed uniformly on $[0,1]$
  and so $\Prob_{(x,y),\tau}(y\in\Gamma^\epsilon(x)) = 1 - \epsilon$,
  where $\Prob_{(x,y),\tau}$ refers to the probability when $(x,y)\sim Q$ and $\tau\sim U$ are independent.
\end{proof}

The following lemma was used in the proof of Theorem~\ref{thm:CP}.
\begin{lemma}\label{lem:right-continuous}
  (a) The function $\Gamma^{\epsilon}(x)=\Gamma^{\epsilon}(x,\tau)$
      of $\epsilon$ is right-continuous for fixed $x$ and $\tau$.
  (b) The function $\Expect_{x,\tau}\left|\Gamma^{\epsilon}(x)\right|$
      is right-continuous in $\epsilon$.
\end{lemma}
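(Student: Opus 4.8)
My plan is to handle the two parts in order, with the finiteness of $\mathbf{Y}$ doing the real work in part~(a). For part~(a) I would fix $x$ and $\tau$, so that the p-values $p(x,y)$, $y\in\mathbf{Y}$, are finitely many fixed real numbers. By \eqref{eq:prediction-set} the set $\Gamma^{\epsilon}(x)=\{y\mid p(x,y)>\epsilon\}$ is then piecewise constant in $\epsilon$, changing only as $\epsilon$ crosses one of the thresholds $p(x,y)$. Fixing $\epsilon_0$, I would set $\delta:=\min\{p(x,y)-\epsilon_0\mid y\in\mathbf{Y},\,p(x,y)>\epsilon_0\}$ (with $\min\emptyset:=+\infty$); since $\mathbf{Y}$ is finite this $\delta$ is strictly positive, and I claim $\Gamma^{\epsilon}(x)=\Gamma^{\epsilon_0}(x)$ for all $\epsilon\in(\epsilon_0,\epsilon_0+\delta)$. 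The verification is label by label: if $p(x,y)>\epsilon_0$ then $p(x,y)\ge\epsilon_0+\delta>\epsilon$, so $y$ lies in both sets, whereas if $p(x,y)\le\epsilon_0$ then $p(x,y)\le\epsilon_0<\epsilon$, so $y$ lies in neither. The essential point, which I would flag explicitly, is that the defining inequality in \eqref{eq:prediction-set} is \emph{strict}: at a threshold $\epsilon_0=p(x,y)$ the label $y$ is already absent from $\Gamma^{\epsilon_0}(x)$, matching the right-hand limit (a nonstrict inequality would instead force left-continuity).

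For part~(b) I would reuse~(a). Since $\bigl|\Gamma^{\epsilon}(x)\bigr|\le\left|\mathbf{Y}\right|$ uniformly in $\epsilon$, $x$, and $\tau$, I would take an arbitrary sequence $\epsilon_n\downarrow\epsilon_0$ and apply the dominated convergence theorem: by~(a), $\bigl|\Gamma^{\epsilon_n}(x,\tau)\bigr|\to\bigl|\Gamma^{\epsilon_0}(x,\tau)\bigr|$ for each fixed $(x,\tau)$, and the constant $\left|\mathbf{Y}\right|$ dominates, so $\Expect_{x,\tau}\bigl|\Gamma^{\epsilon_n}(x)\bigr|\to\Expect_{x,\tau}\bigl|\Gamma^{\epsilon_0}(x)\bigr|$. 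As this holds for every such sequence, the right-hand limit exists and equals the value at $\epsilon_0$, which is precisely right-continuity. An equally short alternative, bypassing~(a), is to write $\Expect_{x,\tau}\bigl|\Gamma^{\epsilon}(x)\bigr|=\sum_{y}\Prob_{x,\tau}(p(x,y)>\epsilon)$, observe that each summand is the survival function of a random variable and hence right-continuous by continuity of measure from below, and note that a finite sum of right-continuous functions is right-continuous.

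I do not expect any serious obstacle here: both parts are elementary once the finiteness of $\mathbf{Y}$ is exploited. The only place a careless argument could slip is the direction of continuity, which hinges entirely on the strict inequality ``$p(x,y)>\epsilon$''; reversing it would ``prove'' left-continuity instead, so I would keep that point front and centre. The measure-theoretic step in~(b) is routine, needing only boundedness for dominated convergence (or monotone continuity of $Q_{\mathbf{X}}\times U$ in the survival-function version).
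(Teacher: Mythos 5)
Your proposal is correct and takes essentially the same approach as the paper: part~(a) is the paper's local-constancy argument (you merely make explicit, via the threshold $\delta$ and the finiteness of $\mathbf{Y}$, what the paper phrases as ``if the increase is sufficiently small, (i) will also be satisfied''), and part~(b) is the same appeal to the Lebesgue dominated convergence theorem with the constant bound $\left|\mathbf{Y}\right|$. Your alternative survival-function argument for~(b) is a valid shortcut, but the main line of your proof coincides with the paper's.
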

\begin{proof}
  Let us first check (a).
  We have
  (i) $p(x,y,\tau)>\epsilon$ for all $y\in\Gamma^{\epsilon}(x,\tau)$, and
  (ii) $p(x,y,\tau)\le\epsilon$ for all $y\notin\Gamma^{\epsilon}(x,\tau)$.
  If we increase $\epsilon$,
  (ii) will be still satisfied,
  and if the increase is sufficiently small,
  (i) will be also satisfied and, therefore,
  $\Gamma^{\epsilon}(x,\tau)$ will not change.
  As for (b),
  the right-continuity of $\Gamma^{\epsilon}(x,\tau)$ in $\epsilon$
  implies the right-continuity of $\left|\Gamma^{\epsilon}(x,\tau)\right|$ in $\epsilon$,
  which implies the right-continuity of
  $\Expect_{x,\tau}\left|\Gamma^{\epsilon}(x,\tau)\right|$ in $\epsilon$
  by the Lebesgue dominated convergence theorem.
\end{proof}

\begin{remark}
  The statement $\mathcal{O}(\SSS)=\mathcal{R}(\CP)$ of Theorem~\ref{thm:CP}
  can be generalised to the criterion $\SSS_{\phi}$ preferring small values of
  \begin{equation*}
    \frac{1}{k}\sum_{i=l+1}^{l+k}\sum_{y}\phi(p^y_i)
    \text{ or }
    \Expect_{x,\tau}\sum_y\phi(p(x,y))
  \end{equation*}
  (instead of \eqref{eq:S-real} or \eqref{eq:S}, respectively),
  where $\phi:[0,1]\to\mathbb{R}$ is a fixed continuously differentiable strictly increasing function,
  not necessarily the identity function.
  Namely, we still have $\mathcal{O}(\SSS_{\phi})=\mathcal{R}(\CP)$.
  Indeed, we can assume, without loss of generality,
  that $\phi(0)=0$ and $\phi(1)=1$ and replace \eqref{eq:basic} by
  \begin{multline*}
    \sum_{y \in \mathbf{Y}}
    \phi(p(x,y))
    =
    \sum_{y \in \mathbf{Y}}
    \int_0^1
    \III_{\{\phi(p(x,y))>\epsilon\}}
    \dd{\epsilon}
    =
    \int_0^1
    \sum_{y \in \mathbf{Y}}
    \III_{\{p(x,y)>\phi^{-1}(\epsilon)\}}
    \dd{\epsilon}\\
    =
    \int_0^1
    \left|\Gamma^{\phi^{-1}(\epsilon)}(x)\right|
    \dd{\epsilon}
    =
    \int_0^1
    \left|\Gamma^{\epsilon'}(x)\right|
    \phi'(\epsilon')
    \dd{\epsilon'},
  \end{multline*}
  where $\phi'$ is the (continuous) derivative of $\phi$,
  and then use the same argument as before.
\end{remark}

\section{Criteria of Efficiency that are not Probabilistic}
\label{sec:not-probabilistic}

Now we define the idealised analogues of the six criteria
that are not set in italics in Table~\ref{tab:criteria}.
An idealised conformity measure~$A$ is:
\begin{itemize}
\item
  \emph{U-optimal} if,
  for any idealised conformity measure $B$,
  we have either
  \begin{equation}\label{eq:U-optimal-1}
    \Expect_{x,\tau}\min_y\max_{y'\ne y}p_A(x,y')
    <
    \Expect_{x,\tau}\min_y\max_{y'\ne y}p_B(x,y')
  \end{equation}
  or both
  \begin{equation}\label{eq:U-optimal-2}
    \Expect_{x,\tau}\min_y\max_{y'\ne y}p_A(x,y')
    =
    \Expect_{x,\tau}\min_y\max_{y'\ne y}p_B(x,y')
  \end{equation}
  and
  \begin{equation}\label{eq:UF-optimal-3}
    \Expect_{x,\tau}\max_{y}p_A(x,y)
    \le
    \Expect_{x,\tau}\max_{y}p_B(x,y);
  \end{equation}
\item
  \emph{M-optimal} if,
  for any idealised conformity measure~$B$
  and any significance level~$\epsilon$,
  we have either
  \begin{equation}\label{eq:M-optimal-1}
    \Prob_{x,\tau}(\left|\Gamma^{\epsilon}_A(x)\right|>1)
    <
    \Prob_{x,\tau}(\left|\Gamma^{\epsilon}_B(x)\right|>1)
  \end{equation}
  or both
  \begin{equation}\label{eq:M-optimal-2}
    \Prob_{x,\tau}(\left|\Gamma^{\epsilon}_A(x)\right|>1)
    =
    \Prob_{x,\tau}(\left|\Gamma^{\epsilon}_B(x)\right|>1)
  \end{equation}
  and
  \begin{equation}\label{eq:ME-optimal-3}
    \Prob_{x,\tau}(\left|\Gamma^{\epsilon}_A(x)\right|=0)
    \ge
    \Prob_{x,\tau}(\left|\Gamma^{\epsilon}_B(x)\right|=0);
  \end{equation}
\item
  \emph{F-optimal} if,
  for any idealised conformity measure $B$,
  we have either
  \begin{equation}\label{eq:F-optimal-1}
    \Expect_{x,\tau}
    \Bigl(
       \sum_y p_A(x,y)
       -
       \max_y p_A(x,y)
    \Bigr)
    <
    \Expect_{x,\tau}
    \Bigl(
       \sum_y p_B(x,y)
       -
       \max_y p_B(x,y)
    \Bigr)
  \end{equation}
  or both
  \begin{equation}\label{eq:F-optimal-2}
    \Expect_{x,\tau}
    \Bigl(
       \sum_y p_A(x,y)
       -
       \max_y p_A(x,y)
    \Bigr)
    =
    \Expect_{x,\tau}
    \Bigl(
       \sum_y p_B(x,y)
       -
       \max_y p_B(x,y)
    \Bigr)
  \end{equation}
  and \eqref{eq:UF-optimal-3};
\item
  \emph{E-optimal} if,
  for any idealised conformity measure~$B$
  and any significance level~$\epsilon$,
  we have either
  \begin{equation}\label{eq:E-optimal-1}
    \Expect_{x,\tau}
    \bigl(
      \left(
        \left|\Gamma^{\epsilon}_A(x)\right|-1
      \right)^+
    \bigr)
    <
    \Expect_{x,\tau}
    \bigl(
      \left(
        \left|\Gamma^{\epsilon}_B(x)\right|-1
      \right)^+
    \bigr)
  \end{equation}
  or both
  \begin{equation}\label{eq:E-optimal-2}
    \Expect_{x,\tau}
    \bigl(
      \left(
        \left|\Gamma^{\epsilon}_A(x)\right|-1
      \right)^+
    \bigr)
    =
    \Expect_{x,\tau}
    \bigl(
      \left(
        \left|\Gamma^{\epsilon}_B(x)\right|-1
      \right)^+
    \bigr)
  \end{equation}
  and \eqref{eq:ME-optimal-3};
\item
  \emph{OU-optimal} if,
  for any idealised conformity measure~$B$,
  \begin{equation}\label{eq:OU-optimal}
    \Expect_{(x,y),\tau}\max_{y'\ne y}p_A(x,y')
    \le
    \Expect_{(x,y),\tau}\max_{y'\ne y}p_B(x,y');
  \end{equation}
\item
  \emph{OM-optimal} if,
  for any idealised conformity measure~$B$
  and any significance level~$\epsilon$,
  \begin{equation}\label{eq:OM-optimal}
    \Prob_{(x,y),\tau}(\Gamma^{\epsilon}_A(x)\setminus\{y\}\ne\emptyset) 
    \le
    \Prob_{(x,y),\tau}(\Gamma^{\epsilon}_B(x)\setminus\{y\}\ne\emptyset).
  \end{equation}
\end{itemize}
In the following three definitions we follow \cite{Vovk/etal:2005book}, Chapter~3.
The \emph{predictability} of $x\in\mathbf{X}$ is
\begin{equation}\label{eq:f}
  f(x)
  :=
  \max_{y\in\mathbf{Y}}
  Q(y\mid x).
\end{equation}
A \emph{choice function} $\hat y:\mathbf{X}\to\mathbf{Y}$
is defined by the condition
\begin{equation}\label{eq:choice}
  \forall x\in\mathbf{X}:
  f(x) = Q(\hat y(x)\mid x).
\end{equation}
Define the \emph{signed predictability idealised conformity measure} corresponding to $\hat y$ by
\[
  A(x,y)
  :=
  \begin{cases}
    f(x) & \text{if $y=\hat y(x)$}\\
    -f(x) & \text{if not};
  \end{cases}
\]
a \emph{signed predictability (SP) idealised conformity measure}
is the signed predictability idealised conformity measure corresponding to some choice function.

For the following two theorems we will need to modify the notion of refinement.
Let $\mathcal{R}'(\SP)$ be the set of all idealised conformity measures $A$
such that there exists an SP idealised conformity measure $B$
that satisfies both \eqref{eq:refinement} and
\begin{equation*}
  B(x,y_1)=B(x,y_2)
  \Longrightarrow
  A(x,y_1)=A(x,y_2)
\end{equation*}
for all $x\in\mathbf{X}$ and $y_1,y_2\in\mathbf{Y}$.

\begin{theorem}\label{thm:SP}
  $\mathcal{O}(\UUU)=\mathcal{O}(\MMM)=\mathcal{R}'(\SP)$.
\end{theorem}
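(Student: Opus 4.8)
The statement contains two equalities, and the plan is to prove the substantive one, $\mathcal{O}(\MMM)=\mathcal{R}'(\SP)$, by a Neyman--Pearson argument at each fixed significance level, and then to deduce $\mathcal{O}(\UUU)=\mathcal{O}(\MMM)$ by the layer-cake device already used for $\mathcal{O}(\SSS)=\mathcal{O}(\NNN)$. The bridge between the two criteria is the observation that, for fixed $x$ and $\tau$, the p-values $p(x,y)$ are nondecreasing in the conformity scores $A(x,y)$ (if $A(x,y_1)<A(x,y_2)$ then $p(x,y_1)\le p(x,y_2)$, and equal scores give equal p-values). Hence the second largest p-value $\min_y\max_{y'\ne y}p(x,y')$ is exactly the p-value of the label with the second largest conformity score, so that $\Prob_{x,\tau}(\lvert\Gamma^{\epsilon}(x)\rvert>1)=\Prob_{x,\tau}(\text{unconfidence}>\epsilon)$ and, in the same way, $\Prob_{x,\tau}(\Gamma^{\epsilon}(x)=\emptyset)=\Prob_{x,\tau}(\text{credibility}\le\epsilon)$.

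For the core equality $\mathcal{O}(\MMM)=\mathcal{R}'(\SP)$ I would fix $\epsilon$ and analyse the minimisation of $\Prob_{x,\tau}(\lvert\Gamma^{\epsilon}(x)\rvert>1)$ with ties broken by maximising $\Prob_{x,\tau}(\Gamma^{\epsilon}(x)=\emptyset)$. First I would show that any optimal measure must, for each object $x$, single out one label with a high conformity score and assign a common score to the remaining $\lvert\mathbf{Y}\rvert-1$ labels. Indeed, by the monotonicity above the prediction set is multiple iff the highest of the non-top labels is included; lumping those labels into one tied block can only lower that label's p-value (splitting a tied group pushes its top element up in the order), so tying them weakly decreases the probability of a multiple prediction. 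This is precisely the extra within-$x$ tie condition that distinguishes $\mathcal{R}'(\SP)$ from the plain refinement set, and it is the step where the ``more than one'' (rather than ``how many'') nature of the M criterion is used.

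Once the tied-block structure is in place, the primary objective becomes linear: the block of $x$ carries $Q$-mass $Q_{\mathbf{X}}(x)(1-Q(\hat y(x)\mid x))$, and a prediction is multiple iff its block is included, so the objective is $\sum_x Q_{\mathbf{X}}(x)\Prob_\tau(\text{block}_x\text{ included})$, to be minimised under the validity constraint \eqref{eq:constraint} that the total included mass equals $1-\epsilon$. This is a Neyman--Pearson problem pitting the object weighting $Q_{\mathbf{X}}$ against the example weighting $Q$: to fill the mass budget while including as few blocks (by object weight) as possible one should first exhaust the promoted labels, which forces the promoted label to be a most probable one (so that the promoted mass is maximal, namely $\Expect_x f(x)$) and then include the blocks of the least predictable objects. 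The secondary maximisation of the empty probability then orders the promoted labels by predictability, placing the most predictable objects' labels at the top of the conformity order. The resulting order is exactly a refinement of an SP order that respects the within-$x$ ties, i.e.\ a member of $\mathcal{R}'(\SP)$; conversely every member of $\mathcal{R}'(\SP)$ realises this order and so attains the optimum.

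Finally, $\mathcal{O}(\UUU)=\mathcal{O}(\MMM)$ follows from $\Expect_{x,\tau}[\text{unconfidence}]=\int_0^1\Prob_{x,\tau}(\text{unconfidence}>\epsilon)\dd\epsilon$ and $\Expect_{x,\tau}[\text{credibility}]=\int_0^1\Prob_{x,\tau}(\text{credibility}>\epsilon)\dd\epsilon$, exactly as in the $\SSS$--$\NNN$ argument: a measure that is M-optimal at every $\epsilon$ minimises the unconfidence integrand pointwise and hence the U primary functional, and among these it maximises the empty probability pointwise and hence minimises the credibility functional; the converse uses that failure at one $\epsilon$ propagates to a non-empty interval by right-continuity. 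This needs an analogue of Lemma~\ref{lem:right-continuous} for $\Prob_{x,\tau}(\text{unconfidence}>\epsilon)$ and $\Prob_{x,\tau}(\text{credibility}>\epsilon)$, which follows from right-continuity of each p-value threshold in $\epsilon$. The hard part is the core Neyman--Pearson step: because the probability of a multiple prediction is a nonlinear functional (``at least two labels'') and the labels of one object are coupled through the shared $\tau$, the fundamental lemma does not apply directly as it did for $\NNN$; it is the monotonicity reduction together with the optimality of tying the lower labels that linearises the objective and makes the threshold argument available.
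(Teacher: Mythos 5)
Your overall strategy coincides with the paper's: you obtain $\mathcal{O}(\UUU)=\mathcal{O}(\MMM)$ exactly as the paper does, by integrating the identities $\Prob_{x,\tau}\left(\left|\Gamma^{\epsilon}(x)\right|>1\right)=\Prob_{x,\tau}(\text{unconfidence}>\epsilon)$ and $\Prob_{x,\tau}\left(\Gamma^{\epsilon}(x)=\emptyset\right)=\Prob_{x,\tau}(\text{credibility}\le\epsilon)$ over $\epsilon$ (the paper's \eqref{eq:Ilia-SP-1} and \eqref{eq:Ilia-SP-2}) and then using right-continuity in $\epsilon$ (Corollary~\ref{cor:right-continuous}(a,b)); and you obtain $\mathcal{O}(\MMM)=\mathcal{R}'(\SP)$ by the same greedy/Neyman--Pearson analysis of the optimal conformity order that the paper carries out by sweeping $\epsilon$ upward from $0$ (your promoted-labels-on-top, blocks-ordered-by-predictability, secondary-criterion-ordering-the-promoted-labels structure is exactly the paper's $2n$-group construction). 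So the architecture is right and matches the paper.

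There is, however, a genuine gap in your lumping step. You claim that tying the non-top labels of an object $x$ into one block ``weakly decreases the probability of a multiple prediction'', justified only by the effect on that object's own p-values. But p-values are globally coupled: each example's p-value is the $Q$-mass of all examples below it in the order (plus $\tau$ times the tied mass), so moving the block elements of $x$ changes the p-values of \emph{other} objects' examples. If you merge the block downward (to the minimum of its scores), every example of another object whose score lies strictly between the old and new positions of a moved element gains mass below it, its p-value strictly increases, and one can arrange masses so that $\Prob_{x,\tau}\left(\left|\Gamma^{\epsilon}(x)\right|>1\right)$ strictly increases at some $\epsilon$; the perturbation is not monotone as stated. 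The fix is to merge upward, to the maximum score of the block: then the block's common p-value is at most the old top-of-block p-value, every other example's p-value weakly decreases, and both the primary M objective and the empty-prediction tie-breaker improve weakly at every $\epsilon$ simultaneously. A second, smaller, point: this perturbation argument shows only that some element of $\mathcal{R}'(\SP)$ attains the optimum; for the set equality you also need that every measure violating the $\mathcal{R}'(\SP)$ structure is \emph{strictly} suboptimal at some $\epsilon$ (e.g., an untied block of a most-predictable object already loses strictly at small $\epsilon$, as in the paper's example \eqref{eq:U-M-example}). That necessity is what the paper's $\epsilon$-sweeping construction, with the ``only if'' direction of the Neyman--Pearson comparison on each range of $\epsilon$, supplies, and your sketch should make it explicit rather than rest on the weak-improvement claim.
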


Theorems~\ref{thm:SP}--\ref{thm:MSP} will be proved in Section~\ref{sec:proofs} below.

Define the \emph{MCP (modified conditional probability) idealised conformity measure}
corresponding to a choice function $\hat y$ by
\[
  A(x,y)
  :=
  \begin{cases}
    Q(y\mid x) & \text{if $y=\hat y(x)$}\\
    Q(y\mid x)-1 & \text{if not};
  \end{cases}
\]
an \emph{MCP idealised conformity measure} is an idealised conformity measure
corresponding to some choice function;
$\mathcal{R}(\MCP)$ is defined analogously to $\mathcal{R}(\CP)$
but using MCP idealised conformity measures rather than the CP idealised conformity measure.

\begin{theorem}\label{thm:MCP}
  $\mathcal{O}(\FFF)=\mathcal{O}(\EEE)=\mathcal{R}(\MCP)$.
\end{theorem}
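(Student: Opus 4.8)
The plan is to follow the template of the proof of Theorem~\ref{thm:CP}: first characterise the E-optimal idealised conformity measures (the $\epsilon$-dependent criterion) and then lift the result to the F criterion through an integral representation of the fuzziness. The analogue of \eqref{eq:basic} here is the pointwise identity
\[
  \sum_y p(x,y)-\max_y p(x,y)
  =
  \int_0^1\bigl(|\Gamma^\epsilon(x)|-1\bigr)^+\dd{\epsilon},
\]
obtained by listing the p-values $p(x,y)$ in decreasing order and observing that $(|\Gamma^\epsilon(x)|-1)^+$ counts those of them, other than the largest, that exceed $\epsilon$. Taking $\Expect_{x,\tau}$ and applying Fubini's theorem shows that the F objective is the $\epsilon$-integral of the E objective, exactly as the S objective was the integral of the N objective in Theorem~\ref{thm:CP}.

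First I would establish $\mathcal{O}(\EEE)=\mathcal{R}(\MCP)$. Fix $\epsilon$ and write $\pi(x,y):=\Prob_\tau(y\in\Gamma^\epsilon_A(x))$. The enabling observation is that, for fixed $x$, each event $\{p_A(x,y)>\epsilon\}\subseteq[0,1]$ is an up-set $\{\tau>\theta_y\}$ (possibly empty or all of $[0,1]$), because $p_A(x,y)$ is affine and nondecreasing in $\tau$; these up-sets are therefore nested, so $\Prob_\tau(\Gamma^\epsilon_A(x)\neq\emptyset)=\max_y\pi(x,y)$. Combined with $(|\Gamma|-1)^+=|\Gamma|-\III_{\{\Gamma\neq\emptyset\}}$ this yields the clean expression
\[
  \Expect_{x,\tau}\bigl(|\Gamma^\epsilon_A(x)|-1\bigr)^+
  =
  \sum_x Q_{\mathbf X}(x)\Bigl(\sum_y\pi(x,y)-\max_y\pi(x,y)\Bigr).
\]
Setting $P(x,y):=1-\pi(x,y)$ for the error probabilities, which satisfy the constraint \eqref{eq:constraint}, minimising this expression is equivalent to maximising $\sum_x Q_{\mathbf X}(x)\bigl(\sum_y P(x,y)-\min_y P(x,y)\bigr)$ subject to $\sum_{x,y}Q(x,y)P(x,y)=\epsilon$.

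This is where the argument diverges from the N case, and where I expect the main difficulty. As in \eqref{eq:objective} the ``credit'' for excluding $(x,y)$ from the prediction set is $Q_{\mathbf X}(x)$ and its ``cost'' is $Q(x,y)=Q_{\mathbf X}(x)Q(y\mid x)$, so a Neyman--Pearson argument still excludes labels in increasing order of $Q(y\mid x)$. The new feature is the subtracted term $\min_y P(x,y)$: excluding the last surviving label of an object earns no further credit, so it is strictly wasteful to exclude a chosen label $\hat y(x)$ before every false label of every object has been excluded. Hence the optimal global exclusion order lists all false labels by increasing $Q(y\mid x)$ first and the chosen labels by increasing $f(x)$ afterwards --- precisely the order induced by an MCP idealised conformity measure, and the measures realising this order for every $\epsilon$ are exactly the elements of $\mathcal{R}(\MCP)$. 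The delicate points are verifying the strict suboptimality of excluding a chosen label early, handling the boundary threshold by randomisation, and checking that the tie-breaker \eqref{eq:ME-optimal-3}, which maximises $\Prob_{x,\tau}(\Gamma^\epsilon=\emptyset)=\sum_x Q_{\mathbf X}(x)\min_y P(x,y)$, singles out $\mathcal{R}(\MCP)$ among the excess-minimisers.

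Finally I would deduce $\mathcal{O}(\FFF)=\mathcal{O}(\EEE)$ from the integral identity, just as $\mathcal{O}(\SSS)=\mathcal{O}(\NNN)$ was deduced in Theorem~\ref{thm:CP}. If $A$ is E-optimal its excess is pointwise minimal in $\epsilon$, so by Fubini its fuzziness is minimal; right-continuity in $\epsilon$ (proved as in Lemma~\ref{lem:right-continuous}) of both the expected excess and the empty-prediction probability $\Prob_{x,\tau}(\Gamma^\epsilon=\emptyset)$ upgrades any strict deficiency at a single $\epsilon$ to a deficiency on an interval, giving the converse. The tie-breakers match under integration: since
\[
  \Expect_{x,\tau}\max_y p(x,y)
  =
  \int_0^1\Prob_{x,\tau}(\Gamma^\epsilon\neq\emptyset)\dd{\epsilon}
  =
  1-\int_0^1\Prob_{x,\tau}(\Gamma^\epsilon=\emptyset)\dd{\epsilon},
\]
minimising the credibility \eqref{eq:UF-optimal-3} is the $\epsilon$-integral of maximising the empty-prediction probability, so the lexicographic F-comparison of \eqref{eq:F-optimal-1} with tie-breaker \eqref{eq:UF-optimal-3} reduces to the integral over $\epsilon$ of the lexicographic E-comparison of \eqref{eq:E-optimal-1} with tie-breaker \eqref{eq:ME-optimal-3}. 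This gives $\mathcal{O}(\FFF)=\mathcal{O}(\EEE)=\mathcal{R}(\MCP)$.
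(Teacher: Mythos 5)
Your proposal is correct and follows essentially the same route as the paper's proof: a Neyman--Pearson-style greedy characterisation of $\mathcal{O}(\EEE)$ as $\mathcal{R}(\MCP)$ (exclude false labels by increasing $Q(y\mid x)$, then chosen labels by increasing $f(x)$), followed by the pointwise identity $\sum_y p(x,y)-\max_y p(x,y)=\int_0^1\bigl(\left|\Gamma^{\epsilon}(x)\right|-1\bigr)^+\dd{\epsilon}$, Fubini, and right-continuity to transfer optimality between E and F, with the tie-breakers matched via $\Expect_{x,\tau}\max_y p(x,y)=1-\int_0^1\Prob_{x,\tau}\left(\left|\Gamma^{\epsilon}(x)\right|=0\right)\dd{\epsilon}$. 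Your explicit rewriting of the expected excess as $\sum_x Q_{\mathbf{X}}(x)\bigl(\sum_y\pi(x,y)-\max_y\pi(x,y)\bigr)$, justified by the nestedness in $\tau$ of the events $\{p_A(x,y)>\epsilon\}$, is a presentational refinement that the paper leaves implicit, but the substance of the argument is the same.
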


Of course, Theorems~\ref{thm:SP} and~\ref{thm:MCP} are equivalent
when $\left|\mathbf{Y}\right|=2$.

The \emph{modified signed predictability (MSP) idealised conformity measure}
is defined by
\[
  A(x,y)
  :=
  \begin{cases}
    f(x) & \text{if $f(x)>1/2$ and $y=\hat y(x)$}\\
    0 & \text{if $f(x)\le1/2$}\\
    -f(x) & \text{if $f(x)>1/2$ and $y\ne\hat y(x)$},
  \end{cases}
\]
where $f$ is the predictability function \eqref{eq:f};
notice that this definition is unaffected by the choice of the choice function.
Let $\mathcal{R}''(\MSP)$ be the set of all refinements $A$ of the MSP idealised conformity measure
such that, for all $x\in\mathbf{X}$ and all $y_1,y_2\in\mathbf{Y}$:
\begin{align*}
  f(x)\ge0.5 \And Q(y_1\mid x)<0.5 \And Q(y_2\mid x)<0.5 &\Longrightarrow A(x,y_1)=A(x,y_2)\\
  f(x)<0.5 &\Longrightarrow A(x,y_1)=A(x,y_2).
\end{align*}

\begin{theorem}\label{thm:MSP}
  $\mathcal{O}(\OU)=\mathcal{O}(\OM)=\mathcal{R}''(\MSP)$.
\end{theorem}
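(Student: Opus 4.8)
The plan is to treat the $\epsilon$-dependent criterion OM first and then obtain OU from it by integration, mirroring the $\SSS$--$\NNN$ portion of the proof of Theorem~\ref{thm:CP}. Fix a significance level $\epsilon$ and write $S:=\Gamma^{\epsilon}_A(x,\tau)$. Splitting $\III_{\{S\setminus\{y\}\ne\emptyset\}}=1-\III_{\{S=\emptyset\}}-\III_{\{S=\{y\}\}}$ and integrating over the true label gives
\[
  \Prob_{(x,y),\tau}(\Gamma^{\epsilon}_A(x)\setminus\{y\}\ne\emptyset)
  =1-\Prob_{x,\tau}(S=\emptyset)-\Expect_{x,\tau}\bigl[Q(y^{*}\mid x)\III_{\{|S|=1\}}\bigr],
\]
where $y^{*}=y^{*}(x,\tau)$ is the unique element of $S$ when $|S|=1$. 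So minimising \eqref{eq:OM-optimal} is the same as maximising $J(A,\epsilon):=\Prob_{x,\tau}(S=\emptyset)+\Expect_{x,\tau}[Q(y^{*}\mid x)\III_{\{|S|=1\}}]$. Every idealised conformity measure automatically obeys the validity identity $\Expect_{x,\tau}Q(S\mid x)=1-\epsilon$ (the error probability is $\epsilon$), writing $Q(S\mid x):=\sum_{y\in S}Q(y\mid x)$; thus $J$ is to be maximised under a fixed expected ``coverage'' $1-\epsilon$.

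Both $J$ and the coverage are sums over objects of a function of that object's prediction set, so the single-$\tau$ coupling is irrelevant (expectations of sums ignore it) and I would analyse each object separately and then allocate coverage. For fixed $x$, a realised $S$ contributes to $(\text{coverage},J)$ the point $(0,1)$ if $S=\emptyset$, the point $(Q(y'\mid x),Q(y'\mid x))$ if $S=\{y'\}$, and $(Q(S\mid x),0)$ if $|S|\ge2$; randomising over $\tau$ realises the convex hull. The upper boundary is the concave envelope of $(0,1)$, the best singleton $(f(x),f(x))$, and $(1,0)$, with $f$ the predictability \eqref{eq:f}. The singleton corner lies above the segment $(0,1)$--$(1,0)$ exactly when $f(x)>1/2$, so for $f(x)>1/2$ the envelope is the broken line $(0,1)\to(f(x),f(x))\to(1,0)$, while for $f(x)\le1/2$ the singleton is dominated and the envelope is the straight segment $(0,1)$--$(1,0)$. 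This is precisely the MSP structure: the segment slopes are $1/f(x)-1<1$, then $1$ on each $f(x)\le1/2$ object, then $f(x)/(1-f(x))>1$, so coverage is cheapest to buy by building singletons $\{\hat y(x)\}$ for the most predictable objects, then by filling the low-predictability blocks, then by expanding the least predictable singletons. Sweeping $\epsilon$ for any $A\in\mathcal{R}''(\MSP)$ rejects examples from the bottom of its order in exactly this sequence, so its coverage allocation coincides at every budget $1-\epsilon$ with the greedy solution of the separable concave programme. A supporting-hyperplane bound then closes the ``$\subseteq$'' direction: at budget $1-\epsilon$ there is a common supergradient $-\lambda(\epsilon)$ of all envelopes at the MSP operating points, so for any competitor $B$ and any object $j_{x}\le j_{x}^{\MSP}-\lambda(c_{x}-c_{x}^{\MSP})$; summing against $Q_{\mathbf{X}}$ and using that $B$ and $A$ both have total coverage $1-\epsilon$ gives $J(B,\epsilon)\le J(A,\epsilon)$, hence $\mathcal{R}''(\MSP)\subseteq\mathcal{O}(\OM)$.

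For the reverse inclusion I would use that OM-optimality demands optimality at every $\epsilon$ simultaneously, which forces an optimal $A$ onto the envelope for all $\epsilon$. First, its coverage allocation must be greedy for all budgets; since blocks of distinct marginal cost are strictly ordered, the order of $A$ must respect every strict MSP inequality, i.e.\ $A$ must refine the MSP measure (reordering is permitted only inside the marginal-cost ties, which is exactly the freedom a refinement retains). Second, within a block that MSP ties, $A$ may not separate the labels of a single object: doing so would place that object at a deterministic intermediate multiple set over an $\epsilon$-interval of positive length, a point strictly below the concave envelope, contradicting optimality there. These are precisely the two tie conditions defining $\mathcal{R}''(\MSP)$, the boundary case $f(x)=1/2$ being exactly the one where the singleton corner lies \emph{on} the segment $(0,1)$--$(1,0)$: promoting the most probable label is then neither helpful nor harmful (a refinement may do it), whereas separating any label of probability $<1/2$ still produces a sub-envelope set (so those must stay tied). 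Together the two directions give $\mathcal{O}(\OM)=\mathcal{R}''(\MSP)$.

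Finally, to obtain $\mathcal{O}(\OU)=\mathcal{O}(\OM)$ I would copy the $\SSS$--$\NNN$ argument of Theorem~\ref{thm:CP}: the layer-cake identity in the style of \eqref{eq:basic} gives $\max_{y'\ne y}p_{A}(x,y')=\int_0^1\III_{\{\Gamma^{\epsilon}_A(x)\setminus\{y\}\ne\emptyset\}}\dd{\epsilon}$, so by Fubini the objective of \eqref{eq:OU-optimal} equals $\int_0^1\Prob_{(x,y),\tau}(\Gamma^{\epsilon}_A(x)\setminus\{y\}\ne\emptyset)\dd{\epsilon}$; an OM-optimal $A$ minimises the integrand for all $\epsilon$ and hence the integral, while a non-OM-optimal $A$ fails to minimise the integrand on a non-empty open interval (by right-continuity of $\epsilon\mapsto\Prob_{(x,y),\tau}(\Gamma^{\epsilon}_A(x)\setminus\{y\}\ne\emptyset)$, proved as in Lemma~\ref{lem:right-continuous}) and hence fails to minimise the integral. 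I expect the main obstacle to be the rigorous passage from the globally $\tau$-coupled conformal problem to the separable concave-envelope programme together with the exact identification of the optimisers at the $f(x)=1/2$ boundary; the remaining steps are bookkeeping analogous to Theorem~\ref{thm:CP}.
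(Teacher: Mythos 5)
Your proposal is correct, and its first half takes a genuinely different route from the paper's. For $\mathcal{O}(\OM)=\mathcal{R}''(\MSP)$ the paper argues by a sequential greedy construction: sweeping $\epsilon$ upward from $0$ and invoking the Neyman--Pearson lemma only implicitly, it determines group by group which examples an OM-optimal measure must rank lowest, arriving at $2n+1$ ordered groups whose internal tie conditions are exactly the definition of $\mathcal{R}''(\MSP)$. You instead fix $\epsilon$, decompose the OM objective as $1-\Prob_{x,\tau}\left(\Gamma^{\epsilon}(x)=\emptyset\right)-\Expect_{x,\tau}\left[Q(y^{*}\mid x)\III_{\{|\Gamma^{\epsilon}(x)|=1\}}\right]$, and solve a separable concave programme under the validity constraint $\Expect_{x,\tau}\sum_{y\in\Gamma^{\epsilon}(x)}Q(y\mid x)=1-\epsilon$, with a per-object concave envelope and a common supporting slope $\lambda(\epsilon)$. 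Your route buys an explicit explanation of where the $1/2$ threshold in the MSP measure comes from (the singleton corner $(f(x),f(x))$ lies above the chord joining $(0,1)$ and $(1,0)$ exactly when $f(x)>1/2$), and the supporting-hyperplane bound defeats all competitors at every $\epsilon$ in one stroke; the paper's sweep, by contrast, reads off the residual freedom (hence the precise description of $\mathcal{R}''(\MSP)$, including the $f(x)=1/2$ boundary) more directly and keeps the proof uniform with those of Theorems~\ref{thm:SP} and~\ref{thm:MCP}. Two caveats, neither fatal: randomising over $\tau$ does not realise the whole convex hull of the per-object points, only mixtures along nested chains (harmless, since the hull is needed only as an upper bound and MSP attains the envelope along its own chains); and your converse direction is, like the paper's, a sketch, though your sub-envelope observation does supply the needed fact that any violation of the MSP order or of a mandatory same-object tie costs strictly on a positive-length interval of $\epsilon$. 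Your second half, $\mathcal{O}(\OU)=\mathcal{O}(\OM)$ via the layer-cake identity, Fubini, and right-continuity, is exactly the paper's argument, namely \eqref{eq:Ilia-MSP} combined with Corollary~\ref{cor:right-continuous}(d).
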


Table~\ref{tab:results} summarises the results given above.
For each of the criteria listed in Table~\ref{tab:criteria}
it gives an optimal idealised conformity measure
and cites the result asserting the optimality of that idealised conformity measure.

\begin{table}[tb]
  \caption{Idealised conformity measures that are optimal for the ten criteria of efficiency
    given in Table~\ref{tab:criteria};
    the arrangement of the criteria is the same as in Table~\ref{tab:criteria}}
    \label{tab:results}

  \medskip
  \begin{center}
  \begin{tabular}{c|c}
    \hline
    \textbf{$\epsilon$-free} & \textbf{$\epsilon$-dependent}\\
    \hline\hline
    \emph{S: CP (Theorem~\ref{thm:CP})} & \emph{N: CP (Theorem~\ref{thm:CP})}\\
    \hline
    U: SP (Theorem~\ref{thm:SP}) & M: SP (Theorem~\ref{thm:SP})\\
    F: MCP (Theorem~\ref{thm:MCP}) & E: MCP (Theorem~\ref{thm:MCP})\\
    \hline
    OU: MSP (Theorem~\ref{thm:MSP}) & OM: MSP (Theorem~\ref{thm:MSP})\\
    \emph{OF: CP (Theorem~\ref{thm:CP})} & \emph{OE: CP (Theorem~\ref{thm:CP})}\\
    \hline
  \end{tabular}
  \end{center}
\end{table}

Theorems~\ref{thm:SP}--\ref{thm:MSP} show that the six criteria that are not set in italics
in Table~\ref{tab:criteria} are not probabilistic
(however, we will see in Corollary~\ref{cor:binary-probabilistic} below that they are BW probabilistic).
These are simple explicit examples
(inevitably involving label spaces $\mathbf{Y}$ with $\left|\mathbf{Y}\right|>2$)
showing that they are not even weakly probabilistic:
\begin{itemize}
\item
  Let $\mathbf{X}=\{1\}$, $\mathbf{Y}=\{1,2,3\}$, and
  \begin{equation}\label{eq:U-M-example}
    \begin{aligned}
      Q_{\mathbf{X}}(1)&=1 & Q(1\mid 1)&=0.2 & Q(2\mid 1)&=0.3 & Q(3\mid 1)&=0.5.
    \end{aligned}
  \end{equation}
  (Remember that, in this paper, $Q(y\mid x)$ always means $Q_{\mathbf{Y}\mid\mathbf{X}}(y\mid x)$.)
  In this case, all refinements of the CP idealised conformity measure are equivalent.
  The U criterion is not probabilistic since the expression
  \begin{equation}\label{eq:U-expression}
    \Expect_{x,\tau}\min_y\max_{y'\ne y}p(x,y')
  \end{equation}
  (cf.\ \eqref{eq:U-optimal-1})
  is $0.35$ for the CP idealised conformity measure
  and is smaller, $0.25$, for the SP idealised conformity measure.
  The M criterion is not probabilistic since at significance level $\epsilon=0.2$
  the CP idealised conformity measure gives the predictor
  $\Gamma^{\epsilon}(1)=\{2,3\}$ (a.s.),
  and so
  \begin{equation*}
    \Prob_{x,\tau}(\left|\Gamma^{\epsilon}_{\CP}(x)\right|>1)
    = 1 > 0.6 =
    \Prob_{x,\tau}(\left|\Gamma^{\epsilon}_{\SP}(x)\right|>1)
  \end{equation*}
  (cf.\ \eqref{eq:M-optimal-1}).
\item
  Let $\mathbf{X}=\{1,2\}$, $\mathbf{Y}=\{1,2,3\}$, and, for a small $\delta>0$,
  \begin{align*}
    Q_{\mathbf{X}}(1)&=0.5 & Q(1\mid 1)&=1/3-\delta & Q(2\mid 1)&=1/3 & Q(3\mid 1)&=1/3+\delta \\
    Q_{\mathbf{X}}(2)&=0.5 & Q(1\mid 2)&=1/3-5\delta & Q(2\mid 2)&=1/3+2\delta & Q(3\mid 2)&=1/3+3\delta.
  \end{align*}
  The CP idealised conformity measure again has only equivalent refinements.
  The F criterion is not probabilistic since the expression
  \begin{equation}\label{eq:F-expression}
    \Expect_{x,\tau}
    \Bigl(
       \sum_y p(x,y)
       -
       \max_y p(x,y)
    \Bigr)
  \end{equation}
  (cf.\ \eqref{eq:F-optimal-1})
  is $3/4+O(\delta)$ for the CP idealised conformity measure
  and is smaller (provided $\delta$ is sufficiently small), $2/3+O(\delta)$,
  for the MCP idealised conformity measure (which is unique).
  The E criterion is not probabilistic since at significance level $\epsilon=2/3$
  the CP idealised conformity measure has a larger expected excess (for small $\delta$)
  than the MCP idealised conformity measure (whose expected excess is zero):
  \begin{equation*}
    \Expect_{x,\tau}
    \bigl(
      \left(
        \left|\Gamma^{\epsilon}_{\CP}(x)\right|-1
      \right)^+
    \bigr)
    = 0.5 + O(\delta) > 0 =
    \Expect_{x,\tau}
    \bigl(
      \left(
        \left|\Gamma^{\epsilon}_{\MCP}(x)\right|-1
      \right)^+
    \bigr)
  \end{equation*}
  (cf.\ \eqref{eq:E-optimal-1}).
\item
  Let us again set $\mathbf{X}=\{1\}$ and $\mathbf{Y}=\{1,2,3\}$, and define $Q$ by \eqref{eq:U-M-example}.
  The OU criterion is not probabilistic since the expression
  \begin{equation}\label{eq:OU-expression}
    \Expect_{(x,y),\tau}
    \max_{y'\ne y}
    p(x,y')
  \end{equation}
  (cf.\ \eqref{eq:OU-optimal})
  is $0.55$ for the CP idealised conformity measure
  and is smaller, $0.5$, for the MSP idealised conformity measure.
  The OM criterion is not probabilistic since at significance level $\epsilon=0.2$
  the CP idealised conformity measure gives the predictor
  $\Gamma^{\epsilon}(1)=\{2,3\}$ (a.s.),
  and so
  \begin{equation*}
    \Prob_{(x,y),\tau}(\Gamma^{0.2}_{\CP}(x)\setminus\{y\}\ne\emptyset)
    = 1 > 0.8 =
    \Prob_{(x,y),\tau}(\Gamma^{0.2}_{\MSP}(x)\setminus\{y\}\ne\emptyset)
  \end{equation*}
  (cf.\ \eqref{eq:OM-optimal}).
\end{itemize}

\begin{corollary}\label{cor:binary-probabilistic}
  All ten criteria of efficiency in Table~\ref{tab:criteria} are BW probabilistic.
\end{corollary}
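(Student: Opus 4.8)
The plan is to lean entirely on Theorems~\ref{thm:CP}--\ref{thm:MSP} and to show that, when $\left|\mathbf{Y}\right|=2$, the optimal conformity measures they single out are themselves refinements of $\CP$ (or, for $\OU$ and $\OM$, that $\CP$ itself lies in the relevant optimal set). Throughout I fix $\mathbf{Y}=\{1,2\}$, so that $Q(2\mid x)=1-Q(1\mid x)$ and $f(x)=\max_y Q(y\mid x)\ge 1/2$ for every $x$. The four italic criteria $\SSS,\NNN,\OF,\OE$ require nothing new: Theorem~\ref{thm:CP} makes them strongly probabilistic, and since $\CP\in\mathcal{R}(\CP)$ trivially, $\CP$ is an optimal refinement of itself, so these criteria are BW probabilistic (indeed probabilistic). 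It remains to handle $\UUU,\MMM$ (optimal set $\mathcal{R}'(\SP)$), $\FFF,\EEE$ (optimal set $\mathcal{R}(\MCP)$), and $\OU,\OM$ (optimal set $\mathcal{R}''(\MSP)$).

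The key step is to rewrite each of $\SP$, $\MCP$, $\MSP$ in the binary case as a transformation of the $\CP$ score $c:=Q(y\mid x)$ and to read off how it interacts with the order induced by $c$. First I would observe that in the binary case $\SP$ and $\MCP$ literally coincide: for a fixed choice function the top label gets $f(x)=c$ and the other label gets $-f(x)=c-1$ under both measures. I would then check that this common measure refines $\CP$, i.e.\ that $\CP(z_1)<\CP(z_2)$ implies $\SP(z_1)<\SP(z_2)$, by a short case analysis: the $\SP$-scores of ``top-side'' examples lie in $[1/2,1]$ and those of ``bottom-side'' examples lie in $[-1,-1/2]$, with $\SP$ increasing in $c$ within each group, and a top-side example can never have strictly smaller $\CP$-score than a bottom-side one. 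For $\MSP$ the situation is even cleaner, since there the two labels at an object with $f(x)=1/2$ are both sent to $0$: as a function of $c$ one gets $\MSP=c$ for $c>1/2$, $\MSP=c-1$ for $c<1/2$, and $\MSP=0$ for $c=1/2$, which is strictly increasing in $c$. Hence $\MSP$ is order-\emph{equivalent} to $\CP$ in the binary case, and in particular $\CP$ refines $\MSP$.

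With these relations in hand the assembly is routine. Taking $B=\SP$ as its own witness shows $\SP\in\mathcal{R}'(\SP)=\mathcal{O}(\UUU)=\mathcal{O}(\MMM)$, and since $\SP$ refines $\CP$ it is an optimal refinement of $\CP$ for $\UUU$ and $\MMM$; likewise $\MCP\in\mathcal{R}(\MCP)=\mathcal{O}(\FFF)=\mathcal{O}(\EEE)$ refines $\CP$, settling $\FFF$ and $\EEE$. For $\OU$ and $\OM$ I would note that the side conditions defining $\mathcal{R}''(\MSP)$ are vacuous when $\left|\mathbf{Y}\right|=2$: ``$f(x)<1/2$'' never occurs, and two distinct labels cannot both have conditional probability below $1/2$ (their probabilities sum to $1$). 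Thus $\mathcal{R}''(\MSP)$ is simply the set of refinements of $\MSP$, which contains $\CP$ by the equivalence above, so $\CP\in\mathcal{O}(\OU)=\mathcal{O}(\OM)$ and these criteria are BW probabilistic as well.

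The main obstacle is the boundary case $f(x)=1/2$, where $\CP$ assigns the same score $1/2$ to both labels while $\SP$/$\MCP$ break this tie (sending one label to $1/2$ and the other to $-1/2$) and $\MSP$ collapses both labels to $0$. I would need to confirm that tie-breaking of this kind is still compatible with being a refinement of $\CP$ (refinements are allowed to split $\CP$-ties but must never reverse a strict $\CP$-inequality), and that the $\SP$-scores of two half-probability labels never collide with scores coming from other objects in a way that inverts the $\CP$ order — which is exactly what the clean separation of the top-side and bottom-side score ranges guarantees. Verifying that this separation survives at $f(x)=1/2$, together with the independence of the final conclusion from the arbitrary choice function, is the one place where care is needed.
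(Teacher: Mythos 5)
Your proof is correct, but it follows a partially different route from the paper's. The crux --- handling $\UUU$ and $\MMM$ by noting that an $\SP$ measure lies in $\mathcal{R}'(\SP)=\mathcal{O}(\UUU)=\mathcal{O}(\MMM)$ (Theorem~\ref{thm:SP}) and is itself a refinement of $\CP$ when $\left|\mathbf{Y}\right|=2$ --- is exactly the paper's argument, which you flesh out with the range-separation check ($\SP$-scores in $[1/2,1]$ versus $[-1,-1/2]$) that the paper dismisses as ``obvious''. Where you diverge is in the remaining four non-italic criteria: the paper never touches Theorems~\ref{thm:MCP} and~\ref{thm:MSP} here, instead exploiting the observation (already recorded in Section~\ref{sec:criteria}) that in the binary case the \emph{criteria themselves} collapse pairwise --- $\FFF\equiv\UUU$, $\EEE\equiv\MMM$, $\OU\equiv\OF$, $\OM\equiv\OE$ --- so that $\FFF,\EEE$ reduce to the already-settled $\UUU,\MMM$, and $\OU,\OM$ follow directly from Theorem~\ref{thm:CP}. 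You instead work with the optimal families $\mathcal{R}(\MCP)$ and $\mathcal{R}''(\MSP)$ directly, verifying the binary-case identities $\SP=\MCP$, the order-equivalence of $\MSP$ with $\CP$, and the vacuity of the side conditions defining $\mathcal{R}''(\MSP)$; all of these verifications are sound, including your treatment of the tie case $f(x)=1/2$ (refinements may split $\CP$-ties, and the score ranges never let a split reverse a strict inequality). The paper's route is shorter and needs only Theorems~\ref{thm:CP} and~\ref{thm:SP}; yours buys slightly more explicit information --- in particular, that $\CP$ itself is $\OU$- and $\OM$-optimal in the binary case, and that in that case $\mathcal{O}(\OU)=\mathcal{O}(\OM)=\mathcal{R}(\CP)$ --- at the cost of invoking two extra theorems and the measure-level identities they require.
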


\begin{proof}
  Criteria S, N, OF, and OE are BW probabilistic
  by Theorem~\ref{thm:CP}.
  Criteria OU and OM are identical to OF and OE, respectively, in the binary case,
  and so are also BW probabilistic.
  Criteria F and E are identical to U and M, respectively, in the binary case,
  and so our task reduces to proving that U and M are BW probabilistic.
  By Theorem~\ref{thm:SP}, it suffices to check $\mathcal{R}(\CP)\cap\mathcal{R}'(\SP)\ne\emptyset$,
  which is obvious:
  SP is in both $\mathcal{R}(\CP)$ and $\mathcal{R}'(\SP)$ when $\left|\mathbf{Y}\right|=2$.
\end{proof}

Criteria of efficiency that are not probabilistic
are somewhat analogous to ``improper scoring rules''
in probability forecasting
(see, e.g., \cite{Dawid:ESS2006PF} and \cite{Gneiting/Raftery:2007}).
The optimal idealised conformity measures for the criteria of efficiency given in this paper
that are not probabilistic have clear disadvantages, such as:
\begin{itemize}
\item
  They depend on the arbitrary choice of a choice function.
  In many cases there is a unique choice function,
  but the possibility of non-uniqueness is still awkward.
\item
  They encourage ``strategic behaviour''
  (such as ignoring the differences, which may be very substantial,
  between potential labels other than $\hat y(x)$ for a test object $x$
  when using the M criterion in the case $\left|\mathbf{Y}\right|>2$).
\end{itemize}
However, we do not use the terminology ``proper/improper''
in the case of criteria of efficiency for conformal prediction
since it is conceivable that some non-probabilistic criteria of efficiency
may still turn out to be useful.

\section{Proofs of Theorems~\ref{thm:SP}--\ref{thm:MSP}}
\label{sec:proofs}

The proofs in this section will be slightly less formal than the proof of Theorem~\ref{thm:CP};
in particular, all references to the Neyman--Pearson lemma will be implicit.

\subsection{Proof of Theorem~\protect\ref{thm:SP}}

We start from checking that $\mathcal{O}(\MMM)=\mathcal{R}'(\SP)$
(essentially reproducing the argument given in the second parts of the proofs of Propositions~3.3 and~3.4
in \cite{Vovk/etal:2005book}).
We will analyze the requirements imposed by being M-optimal
on the prediction set $\Gamma^{\epsilon}$ starting from small values of $\epsilon\in(0,1)$.
(In this paper we only consider $\epsilon$ in the interval $(0,1)$,
even if this restriction is not mentioned explicitly.)

Let $f_1>f_2>\cdots>f_n>0$ be the list of the predictabilities (see \eqref{eq:f})
of all objects $x\in\mathbf{X}$,
with all duplicates removed and the remaining predictabilities sorted in the decreasing order.
It is clear that an M-optimal idealised conformity measure
will assign the lowest conformity to the group of examples $(x,y)$
with $f(x)=f_1$ and $y\ne\hat y(x)$ for some choice function $\hat y$
(see \eqref{eq:choice}).
The conformity of such examples can be different unless they contain the same object
(in which case it must be the same);
the conformity of any example in any other group must be higher
than the conformity of the examples in this first group.
If these conditions are satisfied for some idealised conformity measure $A$,
$A$ will satisfy \eqref{eq:M-optimal-1} or \eqref{eq:M-optimal-2}
for any idealised conformity measure $B$
and any
\[
  \epsilon
  \in
  \left(
    0,
    Q
    \left\{
      (x,y)
      \mid
      f(x)=f_1\And y\ne\hat y(x)
    \right\}
  \right].
\]
The second least conforming group of examples consists of $(x,y)$
with $f(x)=f_2$ and $y\ne\hat y(x)$ for some choice function $\hat y$.
The conformity of examples in the second group can again be different unless they contain the same object.
These and previous conditions
ensure that $A$ will satisfy \eqref{eq:M-optimal-1} or \eqref{eq:M-optimal-2}
for any
\[
  \epsilon
  \in
  \left(
    0,
    Q
    \left\{
      (x,y)
      \mid
      f(x)\ge f_2\And y\ne\hat y(x)
    \right\}
  \right].
\]
Continuing in such a way,
we will obtain a choice function $\hat y$ and the conformity ordering
for the examples whose label is not chosen by that choice function $\hat y$.
All these examples are divided into $n$ groups,
and each elements of the $i$th group is coming before each element of the $j$th group when $i<j$;
in the end we will get $2n$ groups satisfying this property.
The first $n$ groups take care of
\[
  \epsilon
  \in
  \left(
    0,
    Q
    \left\{
      (x,y)
      \mid
      y\ne\hat y(x)
    \right\}
  \right].
\]
The next, $(n+1)$th, group of examples are $(x,\hat y(x))\in\mathbf{Z}$ with $f(x)=f_n$;
they can be ordered in any way between themselves.
If the conditions listed so far are satisfied for an idealised conformity measure $A$,
$A$ will satisfy \eqref{eq:M-optimal-1}--\eqref{eq:ME-optimal-3}
for any idealised conformity measure $B$ and any
\[
  \epsilon
  \in
  \left(
    0,
    Q
    \left\{
      (x,y)
      \mid
      y\ne\hat y(x)
      \text{ or }
      \left(
        y=\hat y(x)\And f(x)=f_n
      \right)
    \right\}
  \right].
\]
The following, $(n+2)$th, group consists of $(x,\hat y(x))\in\mathbf{Z}$ with $f(x)=f_{n-1}$.
Continuing in the same way until all examples are exhausted,
we will obtain a refinement of the SP idealised conformity measure
that belongs to $\mathcal{R}'(\SP)$.

This proof of $\mathcal{O}(\MMM)=\mathcal{R}'(\SP)$ demonstrates the following property
of M-optimal idealised conformity measures.

\begin{corollary}
  If $A\in\mathcal{O}(\MMM)$,
  \[
    \Prob_{x,\tau}(\left|\Gamma^{\epsilon}_{A}(x)\right|>1)
    \Prob_{x,\tau}(\left|\Gamma^{\epsilon}_{A}(x)\right|=0)
    =
    0
  \]
  at each significance level $\epsilon$.
\end{corollary}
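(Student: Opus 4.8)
The plan is to read the statement off the description of $\mathcal{O}(\MMM)=\mathcal{R}'(\SP)$ just established. By Theorem~\ref{thm:SP}, any M-optimal $A$ lies in $\mathcal{R}'(\SP)$, so there is an SP idealised conformity measure $B$, with some choice function $\hat y$, of which $A$ is a refinement. The one structural fact I would extract is that in $A$ every example $(x,y)$ with $y\ne\hat y(x)$ (call it a \emph{false} example) receives a strictly smaller conformity score than every example of the form $(x',\hat y(x'))$ (a \emph{true} example): this is immediate from \eqref{eq:refinement}, since $B(x,y)=-f(x)<0<f(x')=B(x',\hat y(x'))$ implies $A(x,y)<A(x',\hat y(x'))$. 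Thus the true examples occupy the top block of the conformity order and the false examples the bottom block.

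Let $\Phi:=Q\{(x,y)\mid y\ne\hat y(x)\}$ be the total $Q$-mass of the false examples. From \eqref{eq:p-value} together with this block structure I would derive the two-sided bound that I expect to be the crux of the argument: for every $\tau$ and every false example $(x,y)$ one has $p(x,y)\le\Phi$, because $\{z\mid A(z)\le A(x,y)\}$ consists of false examples only and so has $Q$-mass at most $\Phi$; while for every true example $(x,\hat y(x))$ one has $p(x,\hat y(x))\ge\Phi$, because $\{z\mid A(z)<A(x,\hat y(x))\}$ contains all false examples and so has $Q$-mass at least $\Phi$. In words, no false p-value ever exceeds $\Phi$ and no true p-value is ever below it.

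With this bound the two probabilities become incompatible. If $\Prob_{x,\tau}(|\Gamma^\epsilon_A(x)|=0)>0$, then for some $(x,\tau)$ all labels, in particular $\hat y(x)$, have p-value $\le\epsilon$, and combining $p(x,\hat y(x))\le\epsilon$ with $p(x,\hat y(x))\ge\Phi$ gives $\Phi\le\epsilon$. If instead $\Prob_{x,\tau}(|\Gamma^\epsilon_A(x)|>1)>0$, then for some $(x,\tau)$ at least two labels have p-value $>\epsilon$, and since each object carries a single true label at least one of them is a false example $(x,y')$; combining $p(x,y')>\epsilon$ with $p(x,y')\le\Phi$ gives $\epsilon<\Phi$. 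Since $\Phi\le\epsilon$ and $\epsilon<\Phi$ cannot both hold, at least one of the two probabilities vanishes, which is exactly the asserted identity.

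The only delicate point is the derivation of the chain $p(x,y)\le\Phi\le p(x,\hat y(x))$, and even there the work is routine once the top/bottom block structure of the conformity order is recorded; everything else is bookkeeping with \eqref{eq:prediction-set}. As a degenerate case, when $|\mathbf{Y}|=1$ there are no false examples and the multiplicity probability is identically zero, so the statement holds trivially.
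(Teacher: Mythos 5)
Your proof is correct and takes essentially the same route as the paper: the paper presents the corollary as an immediate consequence of the just-established structure of M-optimal measures (every example $(x,y)$ with $y\ne\hat y(x)$ ranked strictly below every example $(x',\hat y(x'))$), which is exactly the block structure you extract from $\mathcal{O}(\MMM)=\mathcal{R}'(\SP)$. Your explicit threshold argument with $\Phi=Q\{(x,y)\mid y\ne\hat y(x)\}$ --- the true-label p-values never fall below $\Phi$, the false-label p-values never exceed it, so empty predictions are impossible for $\epsilon<\Phi$ and multiple predictions are impossible for $\epsilon\ge\Phi$ --- is precisely the detail the paper leaves implicit.
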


Let us now check that $\mathcal{O}(\UUU)=\mathcal{O}(\MMM)$.
Analogously to \eqref{eq:basic} and \eqref{eq:expectation},
we have, for a given idealised conformity measure $A$ (omitted from our notation),
\begin{multline}\label{eq:Ilia-SP-1}
  \Expect_{x,\tau}
  \min_y
  \max_{y'\ne y}
  p(x,y',\tau)
  =
  \Expect_{x,\tau}
  \int_0^1
  \III_
  {\{
    \min_y
    \max_{y'\ne y}
    p(x,y',\tau)
    >
    \epsilon
  \}}
  \dd{\epsilon}\\
  =
  \Expect_{x,\tau}
  \int_0^1
  \III_
  {\{
    \left|
      \Gamma^{\epsilon}(x)
    \right|
    >
    1
  \}}
  \dd{\epsilon}
  =
  \int_0^1
    \Prob_{x,\tau}
    \left(
      \left|
        \Gamma^{\epsilon}(x)
      \right|
      >
      1
    \right)
  \dd{\epsilon}.
\end{multline}
Similarly, we have
\begin{multline}\label{eq:Ilia-SP-2}
  \Expect_{x,\tau}
  \max_y
  p(x,y,\tau)
  =
  \Expect_{x,\tau}
  \int_0^1
  \III_
  {\{
    \max_y
    p(x,y,\tau)
    >
    \epsilon
  \}}
  \dd{\epsilon}\\
  =
  \Expect_{x,\tau}
  \int_0^1
  \III_
  {\{
    \left|
      \Gamma^{\epsilon}(x)
    \right|
    >
    0
  \}}
  \dd{\epsilon}
  =
  \int_0^1
    \Prob_{x,\tau}
    \left(
      \left|
        \Gamma^{\epsilon}(x)
      \right|
      >
      0
    \right)
  \dd{\epsilon}\\
  =
  1
  -
  \int_0^1
    \Prob_{x,\tau}
    \left(
      \left|
        \Gamma^{\epsilon}(x)
      \right|
      =
      0
    \right)
  \dd{\epsilon}.
\end{multline}

Our argument will also use the following continuity property for idealised conformal predictors.
(For now, we only need parts (a) and (b).)

\begin{corollary}\label{cor:right-continuous}
  The functions
  \begin{enumerate}[label=(\alph*)]
  \item 
    $\Prob_{x,\tau}\left(\left|\Gamma^{\epsilon}(x)\right|>1\right)$
  \item 
    $\Prob_{x,\tau}\left(\left|\Gamma^{\epsilon}(x)\right|=0\right)$
  \item 
    $\Expect_{x,\tau}\left(\left(\left|\Gamma^{\epsilon}(x)\right|-1\right)^+\right)$
  \item 
    $\Prob_{(x,y),\tau}\left(\Gamma^{\epsilon}(x)\setminus\{y\}\ne\emptyset\right)$
  \end{enumerate}
  are right-continuous in $\epsilon$.
\end{corollary}
\begin{proof}
  All these statements can be deduced from part~(a) of Lemma~\ref{lem:right-continuous}
  in the same way as in the proof of part~(b) of that lemma.
  The right-continuity of the function $\Gamma^{\epsilon}(x,\tau)$
  implies the right-continuity of $\III_{\{\left|\Gamma^{\epsilon}(x)\right|>1\}}$
  (remember that $\left|\Gamma^{\epsilon}(x)\right|$ takes only integer values).
  Therefore, the right-continuity of
  $\Prob_{x,\tau}\left(\left|\Gamma^{\epsilon}(x)\right|>1\right)$
  follows by the Lebesgue dominated convergence theorem.
  This proves (a), and proofs of (b)--(d) are analogous.
\end{proof}

First suppose that $A$ is M-optimal.
Let $B$ be any idealised conformity measure.
From \eqref{eq:Ilia-SP-1},
it is clear that \eqref{eq:U-optimal-1} holds with $<$ replaced by $\le$.
If, furthermore, we have \eqref{eq:U-optimal-2}:
by Corollary~\ref{cor:right-continuous} we also have \eqref{eq:M-optimal-2} for all $\epsilon$;
therefore, we also have \eqref{eq:ME-optimal-3} for all $\epsilon$;
in combination with \eqref{eq:Ilia-SP-2}, we obtain \eqref{eq:UF-optimal-3}.
Therefore, $A$ is U-optimal.

Now suppose that $A$ is U-optimal.
Let $B$ be the SP idealised conformity measure,
which we know to be not only M-optimal but also U-optimal
(as shown in the previous paragraph).
By the definition (\eqref{eq:U-optimal-1}--\eqref{eq:UF-optimal-3}) of U-optimality,
we have \eqref{eq:U-optimal-2} and \eqref{eq:UF-optimal-3} with $=$ in place of $\le$.
This implies that \eqref{eq:M-optimal-2} holds for all $\epsilon$
(had the equality been violated for some $\epsilon\in(0,1)$,
it would have been violated for a range of $\epsilon$ by Corollary~\ref{cor:right-continuous},
which would have contradicted \eqref{eq:U-optimal-2}).
In the same way, it implies that \eqref{eq:ME-optimal-3} holds
(even with $=$ in place of $\ge$)
for all $\epsilon$.
Therefore, $A$ is M-optimal.

\subsection{Proof of Theorem~\protect\ref{thm:MCP}}

Our argument for $\mathcal{O}(\EEE)=\mathcal{R}(\MCP)$
will be similar to the argument for $\mathcal{O}(\MMM)=\mathcal{R}'(\SP)$
given in the previous subsection;
we will again analyze the requirements imposed by being E-optimal
starting from small values of $\epsilon\in(0,1)$.
Let $g_1<g_2<\cdots<g_n$ be the list of the conditional probabilities $Q(y\mid x)$
of all examples $(x,y)\in\mathbf{Z}$,
with all duplicates removed and the remaining conditional probabilities sorted in the increasing order.
All examples will be split into $2n$ groups,
with the examples in the $i$th and $(n+i)$th groups satisfying $Q(y\mid x)=g_i$,
$i=1,\ldots,n$.
Initially the $i$th group, $i=1,\ldots,n$,
contains all examples satisfying $Q(y\mid x)=g_i$,
and the other groups are empty.
(Later some of the examples will be moved into the groups numbered $n+1,n+2,\ldots$,
and as a result some of the first $n$ groups may become empty.)
It will be true that each element of the $i$th group will be coming before each element of the $j$th group when $1\le i<j\le2n$.

Any F-optimal idealised conformity measure
will assign the lowest conformity to the first group of examples,
perhaps except for examples $(x,y)$ for which $Q(y\mid x)=\max_{y'}Q(y'\mid x)$.
If for some $x\in\mathbf{X}$, the first group contains $(x,y)$ with $Q(y\mid x)=\max_{y'}Q(y'\mid x)$,
we choose one such $(x,y)$ for each such $x$ and move it to the $(n+1)$th group.
The rest of the examples in the group can be ordered in their conformity in any way (with ties allowed).
The examples in the $(n+1)$th group can also be ordered arbitrarily.
Process the 2nd, 3rd,\ldots, $n$th groups in the same way.
It is clear that in the end we will obtain a refinement of an MCP idealised conformity measure.

Next we prove $\mathcal{O}(\EEE)=\mathcal{O}(\FFF)$.
Defining a \emph{p-choice function} $\tilde y:\mathbf{X}\to\mathbf{Y}$
(for a given idealised conformity measure)
by the requirement
\[
  p(x,\tilde y(x))
  =
  \max_y p(x,y),
\]
we have the following analogue of \eqref{eq:basic}:
\begin{multline*}
  \sum_{y \in \mathbf{Y}}
  p(x,y)
  -
  \max_{y \in \mathbf{Y}}
  p(x,y)
  =
  \sum_{y \in \mathbf{Y}\setminus\{\tilde y(x)\}}
  p(x,y)
  =
  \sum_{y \in \mathbf{Y}\setminus\{\tilde y(x)\}}
  \int_0^1
  \III_{\{p(x,y)>\epsilon\}}
  \dd{\epsilon}\\
  =
  \int_0^1
    \sum_{y \in \mathbf{Y}\setminus\{\tilde y(x)\}}
    \III_{\{p(x,y)>\epsilon\}}
  \dd{\epsilon}
  =
  \int_0^1
    \left(
      \left|\Gamma^{\epsilon}(x)\right|
      -
      1
    \right)^+
  \dd{\epsilon}.
\end{multline*}
This implies, similarly to \eqref{eq:expectation},
\begin{equation}\label{eq:Ilia-MCP}
  \Expect_{x,\tau}
  \left(
    \sum_{y \in \mathbf{Y}}
    p(x,y)
    -
    \max_{y \in \mathbf{Y}}
    p(x,y)
  \right)
  =
  \int_0^1
    \Expect_{x,\tau}
    \left(
    \left(
      \left|\Gamma^{\epsilon}(x)\right|
      -
      1
    \right)^+
    \right)
  \dd{\epsilon}.
\end{equation}

Suppose that $A$ is E-optimal, and let $B$ be any idealised conformity measure.
From \eqref{eq:Ilia-MCP},
it is clear that \eqref{eq:F-optimal-1} holds with $<$ replaced by $\le$.
If, furthermore, we have \eqref{eq:F-optimal-2}:
by Corollary~\ref{cor:right-continuous}(c) we also have \eqref{eq:E-optimal-2} for all $\epsilon$;
therefore, we also have \eqref{eq:ME-optimal-3} for all $\epsilon$;
in combination with \eqref{eq:Ilia-SP-2}, we obtain \eqref{eq:UF-optimal-3}.
Therefore, $A$ is F-optimal.

Now suppose that $A$ is F-optimal.
Let $B$ be any MCP idealised conformity measure,
which we know to be both E-optimal and F-optimal.
By the definition of F-optimality,
we have \eqref{eq:F-optimal-2} and \eqref{eq:UF-optimal-3} with $=$ in place of $\le$.
As in the previous subsection, this implies that \eqref{eq:E-optimal-2} holds for all $\epsilon$,
and also implies that \eqref{eq:ME-optimal-3} holds
(even with $=$ in place of $\ge$)
for all $\epsilon$.
Therefore, $A$ is E-optimal.

\subsection{Proof of Theorem~\protect\ref{thm:MSP}}

The proof is similar to the proofs in the previous two subsections.
First we check that $\mathcal{O}(\OM)=\mathcal{R}''(\MSP)$,
analyzing the requirement of OM-optimality
starting from small values of $\epsilon\in(0,1)$.
Let $f_1>f_2>\cdots>f_n>0.5$ be the list of the predictabilities of all objects $x\in\mathbf{X}$
whose predictability exceeds $0.5$,
with all duplicates removed and the remaining predictabilities sorted in the decreasing order.
All examples are split into $2n+1$ groups (perhaps some of them empty)
in such a way that each element of the $i$th group is coming before each element of the $j$th group
when $1\le i < j \le 2n+1$.
The $i$th group, $i=1,\ldots,n$,
contains all examples $(x,y)$ with predictability $f_i$ and $Q(y\mid x)<1/2$,
the $(n+1)$th group contains all examples with predictability $0.5$ or less,
and the $(n+1+i)$th group, $i=1,\ldots,n$,
contains all examples $(x,y)$ with $Q(y\mid x)=f_i$
(there is, however, at most one such example);
it is possible that $n=0$.

Any OM-optimal idealised conformity measure will assign the lowest conformity
to the first group of examples (assuming $n\ge1$),
and those examples can be ordered arbitrarily in their conformity,
except that any examples sharing their objects should have the same conformity.
This group takes care of the values
\[
  \epsilon
  \in
  \left(
    0,
    Q
    \left\{
      (x,y) \mid f(x)=f_1 \And Q(y\mid x) \ne f_1
    \right\}
  \right].
\]
Proceed in the same way through groups $2,\ldots,n$.
The $(n+1)$th group is most complicated (when non-empty).
It contains the following kinds of examples:
\begin{itemize}
\item
  Examples whose predictability is less than $0.5$.
  All such examples should have the same conformity if they share the same object.
\item
  Examples $(x,y)$ whose predictability is exactly $0.5$
  and which satisfy $Q(y\mid x)<0.5$.
  All such examples should have the same conformity if they share the same object.
\item
  Examples $(x,y)$ whose predictability is exactly $0.5$
  and which satisfy $Q(y\mid x)=0.5$.
\end{itemize}
Otherwise, the examples in the $(n+1)$th group can be ordered arbitrarily
in their conformity.
Groups $n+2,\ldots,2n+1$ are singletons or empty and do not cause any problems.
Therefore, an idealised conformity measure is OM-optimal if and only if it is in $\mathcal{R}''(\MSP)$.

Next we check that $\mathcal{O}(\OU)=\mathcal{O}(\OM)$.
Similarly to \eqref{eq:Ilia-SP-1},
we have, for a given idealised conformity measure,
\begin{multline}\label{eq:Ilia-MSP}
  \Expect_{(x,y),\tau}
  \max_{y'\ne y}
  p(x,y',\tau)
  =
  \Expect_{(x,y),\tau}
  \int_0^1
  \III_
  {\{
    \max_{y'\ne y}
    p(x,y',\tau)
    >
    \epsilon
  \}}
  \dd{\epsilon}\\
  =
  \Expect_{(x,y),\tau}
  \int_0^1
  \III_
  {\{
    \Gamma^{\epsilon}(x)
    \setminus
    \{y\}
    \ne
    \emptyset
  \}}
  \dd{\epsilon}
  =
  \int_0^1
    \Prob_{x,\tau}
    \left(
      \Gamma^{\epsilon}(x)
      \setminus
      \{y\}
      \ne
      \emptyset
    \right)
  \dd{\epsilon}.
\end{multline}

By \eqref{eq:Ilia-MSP},
OM-optimality immediately implies OU-optimality.

Now suppose that $A$ is OU-optimal.
Let $B$ be the MSP idealised conformity measure,
which is both OM-optimal and OU-optimal.
If \eqref{eq:OM-optimal} is violated for some $\epsilon$,
it is violated for a range of $\epsilon$
(by Corollary~\ref{cor:right-continuous}(d)),
which, by \eqref{eq:Ilia-MSP}, contradicts the OU-optimality of $A$.
Therefore, $A$ is OM-optimal.

\section{Empirical Study}
\label{sec:empirical}

In this section we demonstrate differences between two of our $\epsilon$-free criteria,
OF (probabilistic) and U (standard but not probabilistic) on the USPS data set of hand-written digits
(\cite{LeCun/etal:1990}; examples of such digits are given in Figure~\ref{fig:images},
which is a subset of Figure~2 in \cite{LeCun/etal:1990}).
We use the original split of the data set into the training and test sets.
Our programs are written in R,
and the results presented in the figures below are for the seed $0$ of the R random number generator;
however, we observe similar results in experiments with other seeds.

\begin{figure*}[t]
\begin{center}
  \includegraphics[width=0.3\textwidth]{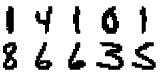}
\end{center}
\caption{Examples of hand-written digits in the USPS data set.}
\label{fig:images}
\end{figure*}

The problem is to classify hand-written digits,
the labels are elements of $\{0,\ldots,9\}$,
and the objects are elements of $\mathbb{R}^{256}$,
where the $256$ numbers represent the brightness of pixels in $16\times16$ pictures.
We normalise each object
by applying the same affine transformation (depending on the object) to each of its pixels
making the mean brightness of the pixels in the picture equal to $0$
and making its standard deviation equal to $1$.
The sizes of the training and test sets are $7291$ and $2007$, respectively.

\begin{figure*}[t]
\begin{center}
  \includegraphics[width=\textwidth]{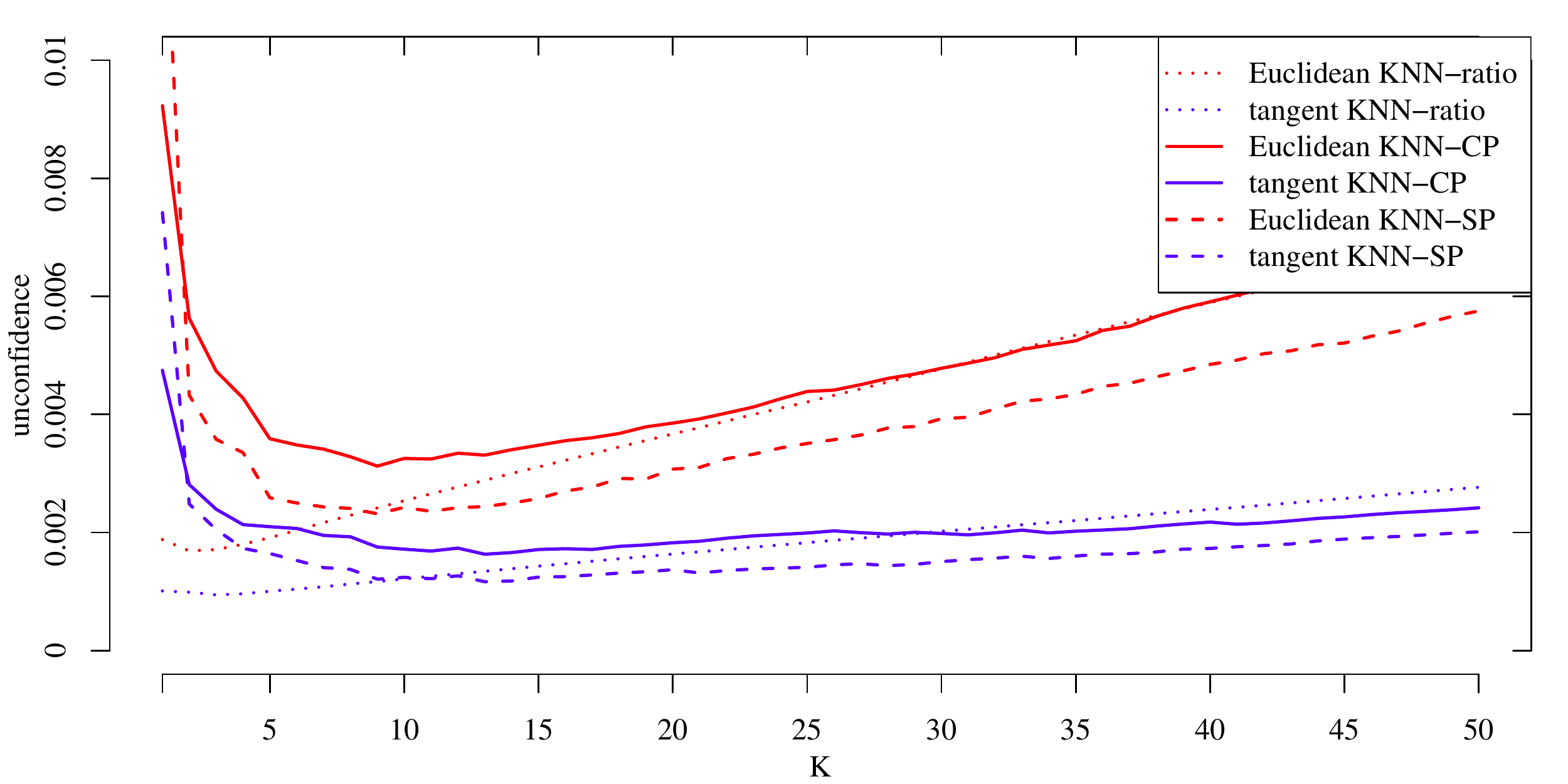}\\[2mm]
  \includegraphics[width=\textwidth]{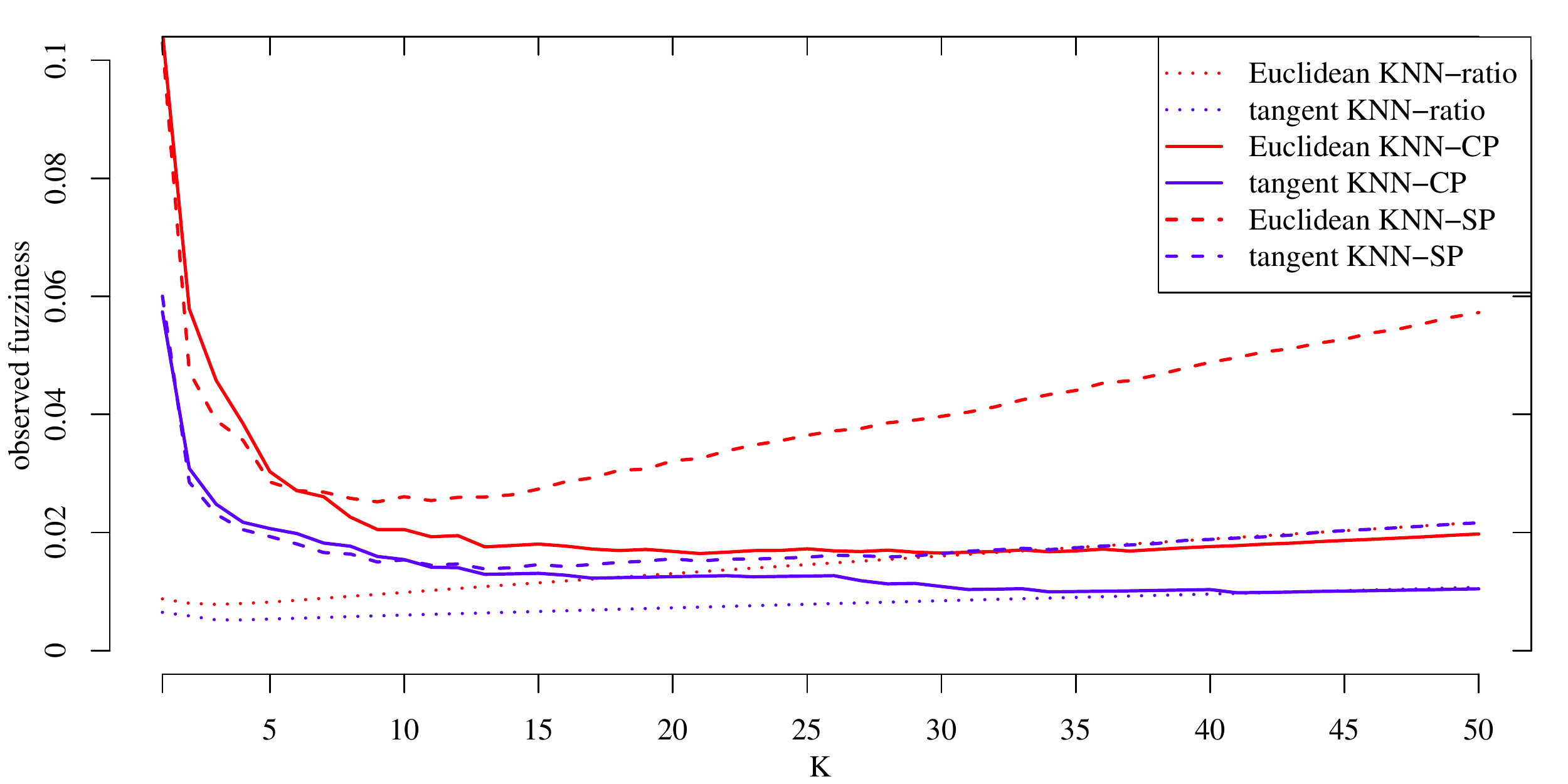}
\end{center}
\caption{Top plot: average unconfidence for the USPS data set
  (for different values of parameters).
  Bottom plot: average observed fuzziness for the USPS data set.
  In black-and-white the lines of the same type (dotted, solid, or dashed)
  corresponding to Euclidean and tangent distances
  can always be distinguished by their position:
  the former is above the latter.}
\label{fig:USPS}
\end{figure*}

We evaluate six conformal predictors using the two criteria of efficiency.
Fix a metric on the object space $\mathbb{R}^{256}$;
in our experiments we use tangent distance (as implemented by Daniel Keysers) and Euclidean distance.
Given a sequence of examples $(z_1,\ldots,z_n)$, $z_i=(x_i,y_i)$,
we consider the following three ways of computing conformity scores:
for $i=1,\ldots,n$,
\begin{itemize}
\item
  $\alpha_i := \sum_{j=1}^K d^{\ne}_j / \sum_{j=1}^K d^{=}_j$,
  where $d_j^{\ne}$ are the distances, sorted in the increasing order,
  from $x_i$ to the objects in $(z_1,\ldots,z_n)$
  with labels different from $y_i$
  (so that $d_1^{\ne}$ is the smallest distance from $x_i$ to an object $x_j$ with $y_j\ne y_i$),
  and $d_j^{=}$ are the distances, sorted in the increasing order,
  from $x_i$ to the objects in $(z_1,\ldots,z_{i-1},z_{i+1},\ldots,z_n)$
  labelled as $y_i$
  (so that $d_1^{=}$ is the smallest distance from $x_i$ to an object $x_j$ with $j\ne i$ and $y_j=y_i$).
  We refer to this conformity measure as the \emph{KNN-ratio conformity measure};
  it has one parameter, $K$,
  whose range is $\{1,\ldots,50\}$ in our experiments
  (so that we always have $K\ll n$).
\item
  $\alpha_i:=N_i/K$,
  where $N_i$ is the number of objects labelled as $y_i$ among the $K$ nearest neighbours of $x_i$
  (when $d_K=d_{K+1}$ in the ordered list $d_1,\ldots,d_{n-1}$ of the distances from $x_i$
  to the other objects,
  we choose the nearest neighbours randomly among $z_j$ with $y_j=y_i$ and with $x_j$ at a distance of $d_K$ from $x_i$).
  This conformity measure is a KNN counterpart of the CP idealised conformity measure (cf.\ \eqref{eq:CP}),
  and we will refer to it as the \emph{KNN-CP conformity measure};
  its parameter $K$ is in the range $\{2,\ldots,50\}$ in our experiments.
\item
  finally, we define $f_i:=\max_y(N_i^y/K)$,
  where $N_i^y$ is the number of objects labelled as $y$ among the $K$ nearest neighbours of $x_i$,
  $\hat y_i\in\arg\max_y(N_i^y/K)$
  (chosen randomly from $\arg\max_y(N_i^y/K)$ if $\left|\arg\max_y(N_i^y/K)\right|>1$),
  and
  \begin{equation*}
    \alpha_i
    :=
    \begin{cases}
      f_i & \text{if $y_i = \hat y_i$}\\
      -f_i & \text{otherwise};
    \end{cases}
  \end{equation*}
  this is the \emph{KNN-SP conformity measure}.
\end{itemize}
The three kinds of conformity measures
combined with the two metrics (tangent and Euclidean) give six conformal predictors.

Figure~\ref{fig:USPS} gives the average unconfidence~\eqref{eq:U}
(top panel) and the average observed fuzziness~\eqref{eq:OF}
(bottom panel)
over the test sequence (so that $k=2007$)
for a range of the values of the parameter $K$.
Each of the six lines corresponds to one of the conformal predictors,
as shown in the legends;
in black-and-white the lines of the same type (dotted, solid, or dashed)
corresponding to Euclidean and tangent distances
can always be distinguished by their position:
the former is above the latter.

The best results are for the KNN-ratio conformity measure combined with tangent distance
for small values of the parameter $K$.
For the two other types of conformity measures their relative evaluation changes
depending on the kind of a criterion used to measure efficiency:
as expected, the KNN-CP conformal predictors are better under the OF criterion,
whereas the KNN-SP conformal predictors are better under the U criterion
(cf.\ Theorems~\ref{thm:CP} and~\ref{thm:SP}),
if we ignore small values of $K$
(when the probability estimates $N_i^y/K$ are very unreliable).

Our conclusion is that whereas some conformal predictors (such as the KNN-ratio ones in our experiments)
can perform well under different criteria of efficiency,
the performance of other conformal predictors depends very much on the criterion of efficiency
used to evaluate it.

\section{Efficiency of Label-conditional Conformal Predictors and Transducers}
\label{sec:label-conditional}

Conformal predictors, as defined in Section~\ref{sec:criteria},
only guarantee the overall coverage probability,
averaged over all labels.
Sometimes we want to have a guarantee for the coverage probability for each label $y\in\mathbf{Y}$ separately,
and in this case one should use label-conditional conformal predictors,
which are studied in this section.

\subsection{Label-conditional conformal predictors and transducers}

The \emph{label-conditional conformal predictor} determined by a conformity measure $A$
is defined by \eqref{eq:conformal-predictor}
where the \emph{label-conditional p-values} $p^y$ are defined by
\begin{multline}\label{eq:p-label-conditional}
  p^y
  :=
  \Bigl(
    \left|\left\{i=1,\ldots,l\mid y_i=y \And \alpha^y_i<\alpha^y_{l+1}\right\}\right|\\
    +
    \tau
    \left|\left\{i=1,\ldots,l\mid y_i=y \And \alpha^y_i=\alpha^y_{l+1}\right\}\right|
    +
    \tau
  \Bigr)\\
  /
  \left(
    \left|\left\{i=1,\ldots,l\mid y_i=y\right\}\right|
    +
    1
  \right)
\end{multline}
(instead of~\eqref{eq:p});
as before,
$\tau$ is a random number distributed uniformly on the interval $[0,1]$
(conditionally on all the examples),
and the conformity scores are defined by~\eqref{eq:conformity-scores}.

The \emph{label-conditional conformal transducer} determined by $A$
outputs the system of p-values $(p^y\mid y\in\mathbf{Y})$
defined by \eqref{eq:p-label-conditional}
for each training sequence $(z_1,\ldots,z_l)$ of examples and each test object $x$.
The property of validity for label-conditional conformal predictors and transducers
is that the p-values $p^y$ are distributed uniformly on $[0,1]$ given $y$
when the examples $z_1,\ldots,z_l,(x,y)$ are generated independently
from the same probability distribution $Q$ on $\mathbf{Z}$
(see, e.g., \cite{Vovk/etal:2005book}, Proposition~4.10).
This implies that the conditional probability of error,
$y\notin\Gamma^{\epsilon}(z_1,\ldots,z_l,x)$,
given $y$ is $\epsilon$ at any significance level~$\epsilon$.

The p-values \eqref{eq:p-label-conditional},
and the corresponding conformal predictors and transducers,
only depend on the conformity order within each class:
now we define $(x_i,y_i)\preceq(x_j,y_j)$ to mean $y_i=y_j$ and $\alpha_i\le\alpha_j$
(with $(x_i,y_i)$ and $(x_j,y_j)$ such that $y_i\ne y_j$ being incomparable).

The definitions of all ten criteria of efficiency
introduced in Section~\ref{sec:criteria} and listed in Table~\ref{tab:criteria}
carry over to the case of label-conditional conformal transducers and predictors.

\subsection{Idealised setting}

As before, we assume that the object space $\mathbf{X}$ is finite
and $Q_{\mathbf{X}}(x)>0$ for all $x\in\mathbf{X}$.
We also assume $Q_{\mathbf{Y}}(y)>0$ for all $y\in\mathbf{Y}$,
where $Q_{\mathbf{Y}}$ is the marginal distribution of $Q$ on the label space $\mathbf{Y}$.

Let $A$ be an idealised conformity measure.
For each potential label $y\in\mathbf{Y}$ for an object $x$
define the corresponding \emph{label-conditional p-value} as
\begin{multline}\label{eq:p-value-label-conditional}
  p^y
  =
  p(x,y)
  :=
  \frac{Q\{(x',y)\in\mathbf{Z}\mid A(x',y)<A(x,y)\}}{Q_{\mathbf{Y}}(y)}\\
  +
  \tau
  \frac{Q\{(x',y)\in\mathbf{Z}\mid A(x',y)=A(x,y)\}}{Q_{\mathbf{Y}}(y)},
\end{multline}
analogously to \eqref{eq:p-value},
with the same random number $\tau\in[0,1]$ used for all $(x,y)$.
The \emph{label-conditional idealised conformal predictor} is defined by \eqref{eq:prediction-set}
for the new definition of $p(x,y)$
and the \emph{label-conditional idealised conformal transducer}
corresponding to the idealised conformity measure $A$
outputs for each object $x\in\mathbf{X}$
the system of p-values $(p^y\mid y\in\mathbf{Y})$ defined by \eqref{eq:p-value-label-conditional}.

The idealised p-values \eqref{eq:p-value-label-conditional},
and the corresponding idealised conformal predictors and transducers,
also depend only on the conformity order within each class:
we can define $(x,y)\preceq(x',y')$ to mean $y=y'$ and $A(x,y)\le A(x',y')$.
Two idealised conformity measures are \emph{equivalent within classes}
if they lead to the same order $\preceq$;
in this section we will consider only this notion of equivalence
(without mentioning it explicitly).

The properties of validity now become conditional:
\begin{itemize}
\item
  If $(x,y)$ is generated from $Q$
  and $\tau$ is generated independently from the uniform probability distribution on $[0,1]$,
  $p(x,y)$ is distributed uniformly on $[0,1]$ even if we condition on $y$.
\item
  Therefore, at each significance level $\epsilon$
  the idealised conformal predictor makes an error
  with conditional probability $\epsilon$ given $y$.
\end{itemize}

\subsection{Probabilistic criteria of efficiency}

\emph{Label-conditionally S-optimal}, \emph{N-optimal}, \emph{OF-optimal}, and \emph{OE-optimal}
idealised conformity measures are defined exactly as S-optimal, N-optimal, OF-optimal, and OE-optimal
idealised conformity measures at the end of Section~\ref{sec:optimal}
but with the label-conditional definitions of the p-values and prediction sets.

Let us say that an idealised conformity measure $A$ is a \emph{label-conditional refinement}
of an idealised conformity measure $B$
if
\begin{equation*}
  B(x_1,y)<B(x_2,y)
  \Longrightarrow
  A(x_1,y)<A(x_2,y)
\end{equation*}
for all $x_1,x_2\in\mathbf{X}$ and all $y\in\mathbf{Y}$.
Notice that the notion of label-conditional refinement is weaker than that of refinement
(as defined by \eqref{eq:refinement}):
if $A$ is a refinement of $B$,
then $A$ is a label-conditional refinement of $B$
(but not vice versa, in general).
Let $\mathcal{R}\lc(\CP)$ be the set of all label-conditional refinements of the CP idealised conformity measure.
If $C$ is a criterion of efficiency (one of the ten criteria in Table~\ref{tab:criteria}),
we let $\mathcal{O}\lc(C)$ stand for the set of all label-conditionally $C$-optimal idealised conformity measures.
We have the following simple corollary of Theorem~\ref{thm:CP}.

\begin{theorem}\label{thm:CP-lc}
  $\mathcal{O}\lc(\SSS)=\mathcal{O}\lc(\OF)=\mathcal{O}\lc(\NNN)=\mathcal{O}\lc(\OE)=\mathcal{R}\lc(\CP)$.
\end{theorem}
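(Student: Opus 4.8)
The plan is to reduce Theorem~\ref{thm:CP-lc} to the already-established Theorem~\ref{thm:CP} by observing that the label-conditional setting decomposes into $\left|\mathbf{Y}\right|$ separate unconditional problems, one for each label. The key structural fact is that the label-conditional p-value $p(x,y)$ defined in \eqref{eq:p-value-label-conditional} depends only on the conditional distribution $Q_{\mathbf{X}\mid\mathbf{Y}}(\cdot\mid y)$ and on the restriction of the conformity measure $A$ to the slice $\mathbf{X}\times\{y\}$. Concretely, for fixed $y$, the quantity $p(x,y)$ is exactly the idealised (unconditional) p-value one would compute for the object $x$ if the ambient distribution were $Q(\cdot\mid y)$ concentrated on that single label. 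This is the decoupling that makes the corollary ``simple''.

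First I would make this decoupling precise. Because each of the four criteria $\SSS$, $\OF$, $\NNN$, $\OE$ is defined by an expectation that can be written as an average over $y\sim Q_{\mathbf{Y}}$ of a within-label expectation, I would rewrite each label-conditional optimality condition as a collection of $\left|\mathbf{Y}\right|$ independent optimality conditions, one per label $y$ with $Q_{\mathbf{Y}}(y)>0$ (all labels, by our standing assumption). For instance, the label-conditional S-criterion objective $\Expect_{x,\tau}\sum_y p_A(x,y)$ splits, via the marginalisation, into a sum over $y$ of terms each governed solely by the slice $A(\cdot,y)$; minimising the total is equivalent to minimising each slice term separately, since the slices do not interact. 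The same splitting applies to $\NNN$, $\OF$, and $\OE$ because the prediction set $\Gamma^{\epsilon}(x)=\{y\mid p(x,y)>\epsilon\}$ has its membership for label $y$ determined entirely by $p(x,y)$, hence by the $y$-slice alone.

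Next I would invoke Theorem~\ref{thm:CP} within each slice. For a fixed label $y$, the within-label problem is exactly an unconditional idealised conformal prediction problem over the object space $\mathbf{X}$ with distribution $Q_{\mathbf{X}\mid\mathbf{Y}}(\cdot\mid y)$ and conformity scores $x\mapsto A(x,y)$. Theorem~\ref{thm:CP} then says that the $y$-slice is optimal for each of the four criteria if and only if $A(\cdot,y)$ is a refinement of the conditional-probability measure for that slice, i.e.\ $Q(y\mid x)$ is a strictly increasing function of $A(x,y)$ as $x$ ranges over objects with that label fixed. Ranging over all $y$, this is precisely the condition $A\in\mathcal{R}\lc(\CP)$ as defined via the label-conditional refinement relation. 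Assembling the per-slice equivalences gives $\mathcal{O}\lc(C)=\mathcal{R}\lc(\CP)$ simultaneously for $C\in\{\SSS,\OF,\NNN,\OE\}$.

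The main obstacle, and the only place requiring genuine care rather than bookkeeping, is verifying that the within-label problem really is a bona fide instance of the unconditional setup so that Theorem~\ref{thm:CP} applies verbatim. Two points need checking: that the normalisation by $Q_{\mathbf{Y}}(y)$ in \eqref{eq:p-value-label-conditional} makes $p(x,y)$ a genuine p-value under $Q_{\mathbf{X}\mid\mathbf{Y}}(\cdot\mid y)$ (it does, which is the label-conditional validity already recorded in this section), and that the shared random number $\tau$ across labels does not couple the slices in a way that breaks the independence of the optimisation problems. The latter is harmless because each criterion's objective is an expectation over $\tau$, and linearity of expectation lets us optimise the slices independently even though a single $\tau$ is used; the cross-label interaction through $\tau$ washes out under $\Expect_{\tau}$. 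Once these two points are dispatched, the corollary follows mechanically from Theorem~\ref{thm:CP} applied slice by slice.
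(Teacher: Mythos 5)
Your per-label decomposition is sound, and it is in fact how the paper's own proof proceeds; but the pivotal step---that the $y$-slice ``is exactly an unconditional idealised conformal prediction problem over $\mathbf{X}$ with distribution $Q_{\mathbf{X}\mid\mathbf{Y}}(\cdot\mid y)$'' so that Theorem~\ref{thm:CP} applies verbatim---is where the argument breaks. In the unconditional setting of Theorem~\ref{thm:CP} the distribution that defines the p-values (the Neyman--Pearson null) and the distribution over which the criteria average the test examples are the \emph{same} measure $Q$. In the $y$-slice they are not: by \eqref{eq:p-value-label-conditional} the p-value $p(x,y)$ is computed under the class-conditional law $Q_{\mathbf{X}\mid\mathbf{Y}}(\cdot\mid y)$, while the criteria weight test objects by the marginal $Q_{\mathbf{X}}$ (the slice contribution to the label-conditional N criterion is $\sum_x Q_{\mathbf{X}}(x)\Prob_{\tau}(p(x,y)>\epsilon)$, not an unweighted or $Q_{\mathbf{X}\mid\mathbf{Y}}$-weighted sum). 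So the slice is a Neyman--Pearson problem with null $Q_{\mathbf{X}\mid\mathbf{Y}}(\cdot\mid y)$ and alternative proportional to $Q_{\mathbf{X}}$, and it is \emph{not} an instance of the setup of Theorem~\ref{thm:CP}. If you force it into that template anyway, its ``CP conformity measure'' is either trivial (the slice has a single label, so every measure is a refinement) or, reading $\mathbf{X}$ as the label set of the slice, equal to $Q_{\mathbf{X}\mid\mathbf{Y}}(x\mid y)\propto Q(x,y)$; neither yields your asserted conclusion that $A(\cdot,y)$ must order objects by $Q(y\mid x)$---indeed $Q(y\mid x)$ cannot even be reconstructed from the slice data you retain, namely $Q_{\mathbf{X}\mid\mathbf{Y}}(\cdot\mid y)$ and $A(\cdot,y)$. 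The two points you flag as ``the only place requiring genuine care'' (the normalisation and the shared $\tau$) are fine, but they are not the actual difficulty.

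The gap is not cosmetic: the forced reduction gives a strictly suboptimal answer whenever $Q_{\mathbf{X}}$ is non-uniform. Take $\mathbf{X}=\{1,2\}$, $\mathbf{Y}=\{a,b\}$, $Q_{\mathbf{X}}(1)=0.9$, $Q(a\mid 1)=0.5$, $Q(a\mid 2)=0.9$. Then $Q_{\mathbf{X}\mid\mathbf{Y}}(1\mid a)=5/6>1/6=Q_{\mathbf{X}\mid\mathbf{Y}}(2\mid a)$, so ``slice-CP'' makes object $2$ the least conforming in class $a$, whereas ordering by $Q(a\mid\cdot)$ makes object $1$ the least conforming. At $\epsilon=1/2$ the slice-$a$ contribution to the label-conditional N criterion is
\[
  0.9\cdot\Prob_{\tau}\bigl(\tfrac56\tau>\tfrac12\bigr)+0.1\cdot 1=0.46
  \quad\text{versus}\quad
  0.9\cdot\Prob_{\tau}\bigl(\tfrac16+\tfrac56\tau>\tfrac12\bigr)+0.1\cdot 0=0.54,
\]
so the $Q(y\mid x)$-ordering strictly beats the slice-CP ordering. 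The repair is to keep your decomposition but redo the Neyman--Pearson step within each slice with the correct pair of hypotheses: maximise $\sum_x Q'(x,y)P(x,y)$ with $Q'(x,y)\propto Q_{\mathbf{X}}(x)$ subject to $\sum_x Q(x,y)P(x,y)=\epsilon Q_{\mathbf{Y}}(y)$ (the label-conditional analogues of \eqref{eq:objective} and \eqref{eq:constraint}); the likelihood ratio $Q_{\mathbf{X}}(x)/Q(x,y)=1/Q(y\mid x)$ then gives exactly the orderings in $\mathcal{R}\lc(\CP)$, and the identities linking $\mathcal{O}\lc(\NNN)$, $\mathcal{O}\lc(\SSS)$, $\mathcal{O}\lc(\OF)$, $\mathcal{O}\lc(\OE)$ carry over because label-conditional validity still gives $\Prob(y\in\Gamma^{\epsilon}(x))=1-\epsilon$. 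This repaired argument is precisely the paper's proof of Theorem~\ref{thm:CP-lc}.
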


\begin{proof}
  The proof is a modification of the proof of Theorem~\ref{thm:CP}.
  In the case of $\mathcal{O}\lc(\NNN)$,
  for each label $y\in\mathbf{Y}$ we have a separate optimization problem.
  Now the constraint becomes
  \begin{equation*}
    \sum_{x}
    Q(x,y)P(x,y)
    =
    \epsilon Q_{\mathbf{Y}}(y)
  \end{equation*}
  (in place of \eqref{eq:constraint}),
  and the objective becomes to maximise $\sum_x Q'(x,y)P(x,y)$
  (since maximising the sum over $(x,y)$ in \eqref{eq:objective}
  can be achieved by maximizing the sum over $x$ for each $y$ separately).
  Now an application of the Neyman--Pearson lemma, as in the proof of Theorem~\ref{thm:CP},
  shows that $\mathcal{O}\lc(\NNN)=\mathcal{R}\lc(\CP)$.

  The same argument as in the proof of Theorem~\ref{thm:CP}
  (the last three paragraphs) shows that
  $\mathcal{O}\lc(\NNN)=\mathcal{O}\lc(\SSS)=\mathcal{O}\lc(\OF)=\mathcal{O}\lc(\OE)$,
  and so we have
  the formula in Theorem~\ref{thm:CP-lc}.
\end{proof}

We say that an efficiency criterion is \emph{label-conditionally probabilistic}
if the CP idealised conformity measure is label-conditionally optimal for it;
we add the qualifier \emph{weakly} if this is true for some (label-conditional) refinement of CP
and \emph{strongly} if this is true for an arbitrary (label-conditional) refinement of CP.
We can see that the four criteria that are set in italics in Table~\ref{tab:criteria}
are still optimal in the label-conditional setting.

\subsection{Other criteria of efficiency}

Using the label-conditional definitions of the p-values and prediction sets,
we define \emph{label-conditionally U-optimal}, \emph{M-optimal}, \emph{F-optimal}, \emph{E-optimal},
\emph{OU-optimal}, and \emph{OM-optimal} idealised conformity measures
in exactly the same way as their unconditional counterparts
at the beginning of Section~\ref{sec:not-probabilistic}.
The label-conditional U and M criteria are standard, and the label conditional E criterion
(with a different treatment of empty observations)
has been introduced and explored in \cite{Sadinle/etal:2016}.

We do not give label-conditional analogues of Theorems~\ref{thm:SP}--\ref{thm:MSP},
since the label-conditionally U-, M-, F-, E-, OU-, and OM-optimal idealised conformity measures
are unlikely to have explicit \label{p:combinatorial-2}expressions
(cf.\ our remark about deterministic conformal predictors on p.~\pageref{p:combinatorial-1}),
unless $\left|\mathbf{Y}\right|=2$.
The following theorem says that all of these criteria are BW probabilistic
(and the examples that we will give after its proof will show that they are not probabilistic).

\begin{theorem}
  If $\left|\mathbf{Y}\right|=2$,
  each of the sets
  \begin{equation}\label{eq:binary-probabilistic-lc}
    \mathcal{O}\lc(\UUU), \mathcal{O}\lc(\MMM), \mathcal{O}\lc(\FFF), \mathcal{O}\lc(\EEE), \mathcal{O}\lc(\OU), \mathcal{O}\lc(\OM)
  \end{equation}
  contains a refinement of the CP idealised conformity measure.
\end{theorem}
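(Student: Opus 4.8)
The plan is to exploit the collapse of the criteria when $\left|\mathbf{Y}\right|=2$ and then reduce as much as possible to the already-proved Theorem~\ref{thm:CP-lc}, handling only the genuinely new cases by rerunning the argument behind Theorem~\ref{thm:SP} label-conditionally. In the binary case the primary objective and the tie-breaker of the U and F criteria coincide (both equal the smaller of the two p-values $\min(p(x,1),p(x,2))$, with credibility as tie-breaker), as do those of M and E (both equal $\III_{\{\left|\Gamma^{\epsilon}(x)\right|=2\}}$, with the number of empty predictions as tie-breaker), those of OU and OF (both equal the single false-label p-value), and those of OM and OE. Hence the six sets in \eqref{eq:binary-probabilistic-lc} collapse to the four sets $\mathcal{O}\lc(\UUU)=\mathcal{O}\lc(\FFF)$, $\mathcal{O}\lc(\MMM)=\mathcal{O}\lc(\EEE)$, $\mathcal{O}\lc(\OU)=\mathcal{O}\lc(\OF)$, and $\mathcal{O}\lc(\OM)=\mathcal{O}\lc(\OE)$, and it suffices to exhibit a refinement of CP in each.

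The OU and OM cases are then immediate from Theorem~\ref{thm:CP-lc}: since $\mathcal{O}\lc(\OF)=\mathcal{O}\lc(\OE)=\mathcal{R}\lc(\CP)$ and the CP idealised conformity measure is a refinement of itself in the sense of \eqref{eq:refinement}, both $\mathcal{O}\lc(\OU)$ and $\mathcal{O}\lc(\OM)$ contain the refinement CP. Only the U ($=$ F) and M ($=$ E) cases require work.

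For M (hence E) I would rerun, label-conditionally and specialised to $\left|\mathbf{Y}\right|=2$, the greedy Neyman--Pearson argument used for $\mathcal{O}(\MMM)=\mathcal{R}'(\SP)$ in Theorem~\ref{thm:SP}, processing significance levels from small $\epsilon$ upward. Because $\left|\Gamma^{\epsilon}(x)\right|>1$ is equivalent to $\min(p(x,1),p(x,2))>\epsilon$, minimising $\Prob_{x,\tau}(\left|\Gamma^{\epsilon}(x)\right|>1)$ at each $\epsilon$ means choosing the two within-class conformity orders so that, for every $\epsilon$, the union of the bottom-$\epsilon$ fractions of the two classes carries as much $Q_{\mathbf{X}}$-mass as possible, with the empty-prediction count \eqref{eq:ME-optimal-3} as tie-breaker. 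The natural witness is the signed predictability (SP) idealised conformity measure: it is a refinement of CP when $\left|\mathbf{Y}\right|=2$ (as already observed in the proof of Corollary~\ref{cor:binary-probabilistic}, where one checks that $A(x,y)$ is a strictly increasing function of $Q(y\mid x)$ on each label), and its within-class orders rank label $1$ by $Q(1\mid\cdot)$ and label $2$ by $Q(2\mid\cdot)$, thereby placing the two bottom-$\epsilon$ fractions at opposite extremes of the predictability scale. I would verify that this opposite-extreme placement realises the greedy optimum and respects the tie-break, so that SP $\in\mathcal{O}\lc(\MMM)$. Its U-optimality then follows by the same layer-cake transfer as in Theorem~\ref{thm:SP}: the identities \eqref{eq:Ilia-SP-1} and \eqref{eq:Ilia-SP-2} hold verbatim for label-conditional p-values, and together with the right-continuity of Corollary~\ref{cor:right-continuous} they turn lexicographic minimisation of $(\Prob_{x,\tau}(\left|\Gamma^{\epsilon}(x)\right|>1),\Prob_{x,\tau}(\left|\Gamma^{\epsilon}(x)\right|=0))$ over all $\epsilon$ into \eqref{eq:U-optimal-1}--\eqref{eq:UF-optimal-3}, giving SP $\in\mathcal{O}\lc(\UUU)$.

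The main obstacle is the coupling in the M case. Unlike the N criterion, whose objective is additive over labels and therefore splits into two independent Neyman--Pearson problems (which is exactly why Theorem~\ref{thm:CP-lc} was a routine modification of Theorem~\ref{thm:CP}), the event $\left|\Gamma^{\epsilon}(x)\right|>1$ ties the two label-conditional p-values together, while the two within-class rankings are scored against the two different class-conditional object distributions and the union is measured against $Q_{\mathbf{X}}$. The delicate point is to show that the opposite-extreme orderings induced by SP (equivalently CP) simultaneously maximise this $Q_{\mathbf{X}}$-mass at every $\epsilon$ and honour the empty-prediction tie-break; I expect this to need the same careful group-by-group bookkeeping as in the proof of Theorem~\ref{thm:SP}, now carried out within each of the two classes.
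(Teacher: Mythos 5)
Your route is the same as the paper's --- the paper also proves this theorem by constructing the optimal conformity measure greedily, starting from small values of $\epsilon$, ``as in the proofs of Theorems~\ref{thm:SP}--\ref{thm:MSP}'', and your layer-cake transfer via \eqref{eq:Ilia-SP-1}, \eqref{eq:Ilia-SP-2} and Corollary~\ref{cor:right-continuous} is exactly the paper's mechanism --- and your explicit reduction of the OU and OM cases to Theorem~\ref{thm:CP-lc} through the binary collapse $\OU=\OF$, $\OM=\OE$ is a clean shortcut that the paper leaves implicit. However, there is one genuine gap: the paper's proof explicitly assumes that the values $Q(1\mid x)$ are pairwise distinct across $x\in\mathbf{X}$ (remarking that otherwise finding a suitable refinement becomes, in general, a difficult combinatorial problem); you make no such assumption, and without it your key step ``SP realises the greedy optimum, so $\SP\in\mathcal{O}\lc(\MMM)$'' is false.

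Concretely, let $\mathbf{X}=\{a,b\}$, $Q_{\mathbf{X}}(a)=Q_{\mathbf{X}}(b)=1/2$, and $Q(1\mid a)=Q(1\mid b)=0.3$. The choice function is forced ($\hat y\equiv 2$), so SP assigns equal scores within each class; since the same $\tau$ serves all p-values, all four label-conditional p-values equal $\tau$, and hence
\begin{equation*}
  \Prob_{x,\tau}\bigl(\bigl|\Gamma^{\epsilon}_{\SP}(x)\bigr|>1\bigr)
  =
  1-\epsilon .
\end{equation*}
By contrast, the refinement $A$ of CP with $A(a,1)<A(b,1)<A(b,2)<A(a,2)$, whose within-class tie-breaks run in opposite orders, gives $p_A(a,1)=p_A(b,2)=\tau/2$ and $p_A(b,1)=p_A(a,2)=1/2+\tau/2$, hence
\begin{equation*}
  \Prob_{x,\tau}\bigl(\bigl|\Gamma^{\epsilon}_{A}(x)\bigr|>1\bigr)
  =
  (1-2\epsilon)^+
  <
  1-\epsilon
  \quad\text{for all }\epsilon\in(0,1),
\end{equation*}
and also $\Expect_{x,\tau}\min_y\max_{y'\ne y}p_A(x,y')=1/4<1/2$; so SP is neither label-conditionally M-optimal nor U-optimal here (the theorem itself is not contradicted, since $A$ is a refinement of CP). The mechanism is that keeping within-class ties is the worst possible choice for the label-conditional M criterion: over a tied group the bottom-$\epsilon$ fractions of the two classes rise and fall together with the common $\tau$, so they overlap as much as possible instead of sitting at opposite extremes. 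Under the paper's distinctness assumption your argument does go through (SP is then within-class equivalent to CP, the two bottom fractions are disjoint until their union is everything, and per-class Neyman--Pearson optimality plus disjointness gives optimality at every $\epsilon$, with the tie-breakers handled as you describe). To repair your write-up, either add that hypothesis, as the paper does, or replace the witness by a refinement of CP whose within-class tie-breaks are chosen in opposite orders and prove its optimality in general --- which is precisely the combinatorial difficulty the paper deliberately sidesteps.
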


\begin{proof}
  Assume, without loss of generality, that $\mathbf{Y}=\{0,1\}$.
  And let us assume, for simplicity, that the values $Q(1\mid x)$ are all different for different $x\in\mathbf{X}$
  (if this condition is not satisfied, the theorem still holds,
  but finding a suitable refinement becomes, in general, a difficult combinatorial problem).
  In this case it is easy to see that each of the sets in \eqref{eq:binary-probabilistic-lc}
  is the equivalence class of the CP idealised conformity measure:
  we can construct the optimal idealised conformity measure gradually starting from small values of $\epsilon$,
  as in the proofs of Theorems~\ref{thm:SP}--\ref{thm:MSP}.
\end{proof}

The following examples show that none of the criteria considered in this subsection
is probabilistic (or even weakly probabilistic):
\begin{itemize}
\item
  Let $\mathbf{X}=\{1,2\}$, $\mathbf{Y}=\{1,2,3,4\}$, and
  \begin{equation}\label{eq:lc-example}
    \begin{aligned}
      Q_{\mathbf{X}}(1)&=0.5 & Q(1\mid 1)&=0.2 & Q(2\mid 1)&=0.3 & Q(3\mid 1)&=0.2 & Q(4\mid 1)&=0.3 \\
      Q_{\mathbf{X}}(2)&=0.5 & Q(1\mid 2)&=0.3 & Q(2\mid 2)&=0.2 & Q(3\mid 2)&=0.3 & Q(4\mid 2)&=0.2
    \end{aligned}
  \end{equation}
  ($Q(y\mid x)$ meaning $Q_{\mathbf{Y}\mid\mathbf{X}}(y\mid x)$, as usual).
  All refinements of the CP idealised conformity measure are equivalent
  (as for different labels $y$ the two conditional probabilities $Q(y\mid x)$, $x=1,2$, are different),
  and so all of them will lead to the same p-values.
  Let $A$ be any idealised conformity measure
  that makes all observations containing object $1$ less conforming than all observations containing object $2$.
  The U criterion is not probabilistic since the expression \eqref{eq:U-expression}
  is $0.7$ for the CP idealised conformity measure
  and is smaller, $0.55$, for the idealised conformity measure $A$.
  The M criterion is not probabilistic since at significance level $\epsilon=0.4$
  the CP idealised conformity measure gives the predictor
  $\Gamma^{\epsilon}(1)=\{2,4\}$ and $\Gamma^{\epsilon}(2)=\{1,3\}$ (a.s.),
  and so
  \begin{equation*}
    \Prob_{x,\tau}(\left|\Gamma^{\epsilon}_{\CP}(x)\right|>1)
    = 1 > 2/3 =
    \Prob_{x,\tau}(\left|\Gamma^{\epsilon}_A(x)\right|>1)
  \end{equation*}
  (cf.\ \eqref{eq:M-optimal-1}).
\item
  Let $\mathbf{X}=\{1,2,3\}$, $\mathbf{Y}=\{1,2,3\}$, and, for a small $\delta>0$,
  \begin{align*}
    Q_{\mathbf{X}}(1)&=1/3 & Q(1\mid 1)&=1/3+\delta & Q(2\mid 1)&=1/3-2\delta & Q(3\mid 1)&=1/3+\delta \\
    Q_{\mathbf{X}}(2)&=1/3 & Q(1\mid 2)&=1/3-\delta & Q(2\mid 2)&=1/3+2\delta & Q(3\mid 2)&=1/3-\delta \\
    Q_{\mathbf{X}}(3)&=1/3 & Q(1\mid 3)&=1/3 & Q(2\mid 2)&=1/3 & Q(3\mid 3)&=1/3.
  \end{align*}
  All refinements of the CP idealised conformity measure are equivalent,
  and so the choice of the refinement does not affect the p-values.
  Let $A$ be an idealised conformity measure satisfying
  \begin{multline*}
    A(1,2)
    <
    A(2,1)
    =
    A(2,3)
    <
    A(3,1)
    =
    A(3,2)\\
    <
    A(1,1)
    =
    A(1,3)
    <
    A(2,2)
    <
    A(3,3)
  \end{multline*}
  (in other words, $A$ is the CP idealised conformity measure
  modified in such a way that that it assigns to $(3,3)$ the highest conformity score).
  The F criterion is not probabilistic since the expression \eqref{eq:F-expression}
  is $7/9+O(\delta)$ for the CP idealised conformity measure
  and is smaller (for sufficiently small $\delta$), $2/3+O(\delta)$, for $A$.
  The E criterion is not probabilistic since at significance level $\epsilon=2/3$
  the idealised conformity measure $A$ gives a predictor whose excess is always $0$,
  whereas the CP idealised conformity measure will have expected excess $1/3+O(\delta)$.
\item
  Let $\mathbf{X}=\{1,2\}$, $\mathbf{Y}=\{1,2,3,4\}$,
  and $Q$ be defined by \eqref{eq:lc-example}.
  Let $A$ be any idealised conformity measure
  that makes all observations containing object $1$ less conforming than all observations containing object $2$.
  The OU criterion is not probabilistic since the expression \eqref{eq:OU-expression}
  is $0.7$ for the CP idealised conformity measure
  and is smaller, $0.55$, for the idealised conformity measure $A$.
  The OM criterion is not probabilistic since at significance level $\epsilon=0.4$
  the CP idealised conformity measure produces an observed multiple prediction a.s.,
  whereas the idealised conformity measure $A$ produces an observed multiple prediction with probability $2/3$.
\end{itemize}

\section{Conclusion}
\label{sec:conclusion}

This paper investigates properties of various criteria of efficiency of conformal prediction
in the case of classification.
It would be interesting to transfer, to the extent possible, this paper's results to the cases of:
\begin{itemize}
\item
  Regression.
  The sum of p-values (as used in the S criterion)
  now becomes the integral of the p-value as function of the label $y$ of the test example,
  and the size of a prediction set becomes its Lebesgue measure
  (considered, as already mentioned, in \cite{Lei/Wasserman:2013}
  in the non-idealised case).
  Whereas the latter is typically finite,
  ensuring the convergence of the former is less straightforward.
\item
  Anomaly detection.
  A first step in this direction is made in \cite{Smith/etal:2014COPA},
  which considers the average p-value as its criterion of efficiency.
\item
  Infinite, including non-discrete, object spaces $\mathbf{X}$.
\item
  Non-idealised conformal predictors.
\item
  Significance levels $\epsilon=\epsilon_y$ that depend on the label $y\in\mathbf{Y}$
  in the label-conditional case.
\end{itemize}

The main part of this paper merely mentions what we called ``combinatorial problems''
(see pages \pageref{p:combinatorial-1} and \pageref{p:combinatorial-2}).
It would be interesting to explore them systematically.
As an example, let us consider the N criterion of efficiency for deterministic idealised conformal predictors
(with $\tau$ set to $1$ rather than being random)
in the case $\left|\mathbf{Y}\right|=1$
(which we did not allow in the main part of the paper;
in this case,
there is no difference between unconditional and label-conditional idealised conformal predictors).
The problem of finding an N-optimal idealised conformity measure
then becomes the \textsc{Subset-Sum Problem}, known to be NP-hard:
see, e.g., \cite{Martello/Toth:1990}, Chapter~4
(a special case of this problem, \textsc{Partition},
was already one of Karp's original 21 NP-complete problems \cite{Karp:1972}).
There are, however, efficient polynomial approximation schemes
for this problem.
It would be interesting, in particular, to find such schemes
for general deterministic idealised conformal predictors and transducers
and for smoothed idealised conformal predictors and transducers
for non-probabilistic criteria of efficiency in the label-conditional case.

\subsection*{Acknowledgments}

We are grateful to the reviewers of the conference version of this paper
for their helpful comments.
  This work was partially supported by EPSRC (grant EP/K033344/1),
  the Air Force Office of Scientific Research (grant ``Semantic Completions''),
  and the EU Horizon 2020 Research and Innovation programme (grant 671555).

\end{document}